%% file: arxiv_v1.tex
\documentclass[]{fairmeta}

\usepackage{xurl}     %

\usepackage{booktabs}
\usepackage{bbm}
\usepackage{url}
\usepackage{amsthm,amssymb,amsmath}
\usepackage{thmtools}
\usepackage{mathtools}
\usepackage{upgreek}
\usepackage{dsfont}
\usepackage{cleveref}
\usepackage{comment}
\usepackage{subcaption}

\usepackage{enumitem}

\input{macros.tex}

\title{Likelihood-Based Reward Designs \\for General LLM Reasoning}

\author[1]{Ariel Kwiatkowski}
\author[2,*]{Natasha Butt}
\author[1]{Ismail Labiad}
\author[1,3,\dagger]{Julia Kempe}
\author[1,\dagger]{Yann Ollivier}

\affiliation[1]{Meta FAIR}
\affiliation[2]{University of Amsterdam}
\affiliation[3]{New York University}

\contribution[*]{Work done during an internship at Meta}
\contribution[\dagger]{Joint senior authors}

\newcommand{\confonly}[1]{}     %
\newif\ifarxiv  \arxivtrue
\newif\ifconference \conferencefalse

\begin{document}

\abstract{\input{abstract2}}

\date{\today}
\correspondence{Ariel Kwiatkowski at \email{ariel.j.kwiatkowski@gmail.com}}

\maketitle

\input{intro.tex}

\input{related.tex}

\input{method.tex}

\input{experiments.tex}

\input{conclusion.tex}

\bibliography{neurips_2025,refs}
\bibliographystyle{iclr2026_conference}

\appendix

\input{proofs.tex}

\end{document}

%% file: macros.tex
\newtheorem{lemma}{Lemma}

\newtheorem{template}{Template}
\declaretheoremstyle[headfont=\bf,bodyfont=\normalfont]{ex}

\declaretheoremstyle[bodyfont=\normalfont]{rm}

\def\E{{\mathbb{E}}} %
\newcommand{\D}{\mathcal{D}}  %

\def\sg{{^\mathrm{sg}}} %
\newcommand{\abs}[1]{\left\lvert#1\right\rvert}  %

\crefname{template}{Template}{Template}

\definecolor{myredd}{RGB}{176,35,24}
\definecolor{mygreenn}{RGB}{78,173,91}
\definecolor{mybluee}{RGB}{46,112,186}
\definecolor{myred}{RGB}{255,225,220}
\definecolor{mygreen}{RGB}{194,244,210} 
\definecolor{myblue}{RGB}{216,236,255}
\definecolor{byzantine}{rgb}{0.74, 0.2, 0.64}
\definecolor{table-blue}{RGB}{173, 216, 230}

\newcommand{\highlightbox}[1]{
\begin{tcolorbox}[
  colback=myblue!50!white,  %
  boxrule=0pt,          %
  frame empty,
  boxrule=0pt,
  colframe=white,
  width=\textwidth,     %
  left=3pt,             %
  right=0pt             %
]
#1
\end{tcolorbox}
}

%% file: abstract2.tex
Fine-tuning large language models (LLMs) on reasoning benchmarks via
reinforcement learning requires a specific reward
function, often binary, for each benchmark.  This comes with two
potential limitations: the need to design the reward, and the potentially
sparse nature of binary rewards. 
Here, we systematically
investigate rewards derived
from the probability or log-probability of emitting the reference answer
(or any other prompt continuation present in the data), which have the advantage of not relying on specific
verifiers and being available at scale.
Several recent
works have advocated for the use of similar rewards (e.g., VeriFree, JEPO,
RLPR, NOVER).

We systematically compare variants of likelihood-based rewards with standard
baselines, testing performance both on standard mathematical reasoning
benchmarks, and on long-form answers where no external verifier is
available. We find that using the \emph{log-probability} of the reference
answer as the reward for chain-of-thought (CoT) learning is the only
option that performs well in all setups.  This reward is also consistent with
the next-token log-likelihood loss used during pretraining.

In verifiable settings, log-probability rewards bring comparable or
better success rates than reinforcing with standard binary rewards, and
yield much better perplexity. In non-verifiable settings, they perform
on par with SFT. On the other hand, methods based on probability, such as VeriFree, flatline on
non-verifiable settings due to vanishing probabilities of getting the
correct answer.

Overall, this establishes log-probability rewards as a viable method for
CoT fine-tuning, bridging the short, verifiable and long,
non-verifiable answer settings. 

%% file: intro.tex
\section{Introduction}

Large language models (LLMs) have achieved striking progress on tasks requiring reasoning, from mathematics to code generation \citep{cobbe2021training,HendrycksBKABTS21,openai2023gpt4}. A central ingredient has been chain-of-thought (CoT) prompting, where models articulate intermediate reasoning steps before producing a final answer \citep{wei2022chain,guo2025deepseek}. However, CoTs are rarely available in raw training data, making reinforcement learning (RL) the predominant approach: the CoT is treated as a sequence of actions, and correctness of the final answer determines the reward. This paradigm works well in verifiable domains such as mathematics and programming, where ground-truth correctness is available \citep{cobbe2021training,HendrycksBKABTS21,chen2021evaluating,austin2021programs,hendrycks2021measuring_apps}, but it does not naturally extend to non-verifiable domains like long-form proofs or open-ended generation.

To overcome this limitation, we investigate reusing training signals
closer to the log-likelihood signal already employed during pretraining.
Instead of sampling answers and relying on 0/1 correctness rewards, we
reward the model for increasing the probability or log-probability of
the answers present in the training data.
Such criteria are universal—they apply in both verifiable and
non-verifiable settings and could provide a denser signal.
Such approaches are present, e.g., in \citet{zhou2025verifree} (training
with the probability of the reference answer) or
\citet{tang2025beyond} (training with a
variant of the log-probability of the reference answer).
Note that reference answers are available for situations
in which 0/1 rewards are not, such as long-form question-answering. This
makes it possible to test these methods both in verifiable and
non-verifiable, long-form answer settings.
We particularly focus on the case of log-probability, since it is
conceptually the closest to the pretraining criterion.

\paragraph{Our Approach and Contributions.}
We conduct the first comprehensive study of probability-based RL rewards
for CoT training, spanning verifiable and non-verifiable domains, across multiple model families (Qwen-2.5, Llama-3.2).
Our main contributions and findings are:

\begin{itemize}%
\item {\bf Systematic evaluation across domains.} We test many
variants of probability-based rewards (probabilities and
log-probabilities, including several variants from the literature
such as VeriFree, RLPR, JEPO) for CoT training, comparing against supervised
fine-tuning (SFT) and standard RL training (RLOO) baselines. We run
the comparisons on
two verifiable benchmarks – MATH
\citep{HendrycksBKABTS21}, DeepScaleR \citep{deepscaler2025} –
and two non-verifiable settings - Alpaca \citep{alpaca} and the
non-verifiable "proof portion" of NuminaMath
\citep{numina_math_datasets}.

\item {\bf Universality of log-probability rewards.} Among the variants
tested, rewards based on \emph{log}-probabilities 
perform well in every scenario (short, verifiable answers and long,
non-verifiable answers), while all others fail in one or several
settings.

\item {\bf Advantages of probability-based rewards.} For verifiable
domains, all variants of
probability-based rewards perform similarly and slightly outperform base RL training in terms
of greedy success rate on verifiable domains. They also offer some
computational advantages during training (no need to sample an answer).

\item {\bf Success and perplexity trade-offs.} In verifiable domains,
log-probability rewards perform well both in terms of success rate and of
perplexity -- a key metric aligned with
pretraining. On the other hand, both base RL training and
probability-based rewards
perform extremely poorly on perplexity (much worse than SFT). This highlights a distinct advantage of log-probabilites.

\item {\bf Non-verifiable domain behavior.} On long-form domains, both
base RL and pure probability rewards collapse due to vanishing
probabilities of long answers. Log-probability rewards remain viable and
perform similarly to SFT.

\item {\bf CoT shortening with log-probability rewards.} 
In every scenario, log-probability rewards lead to an initial shortening
of the CoT. For verifiable domains, the length of the CoT recovers during
training.
On the other hand, for non-verifiable domains, the CoT stays
very short, meaning log-probability rewards largely follow SFT from that
point. On verifiable domains, base RL and pure probability rewards
(VeriFree) do not exhibit this shortening.
Mitigating
strategies such as CoT length rewards and KL penalties maintain CoT 
but hurt performance. Thus, it seems that RL CoT training on non-verifiable
domains can only match SFT by eliminating the CoT. We discuss hypotheses
around this phenomenon. %

\end{itemize}

Overall, these results establish log-likelihood rewards as a simple way
to bridge verifiable and non-verifiable settings under a single training
criterion, broadly applicable for fine-tuning  LLMs. %

%% file: related.tex
\paragraph{Related Work.} Several prior works have proposed to modify the binary rewards in standard RL post-training settings. We can globally distinguish these rewards into {\em intrinsic} rewards that do not require ground-truth, and those that use the {\em confidence or log-likelihood of the ground-truth} answer. The former category utilizes measures of confidence, entropy or diversity as measured by the generating language model itself
\citep{prabhudesai2025maximizingconfidenceimprovesreasoning,agarwal2025the,zhao2025learningreasonexternalrewards,li2025confidenceneedfewshotrl,gao2025one}.
Nevertheless, these intrinsic rewards generally cannot surpass rewards grounded in true correctness except under strong coverage assumptions, and tend to lead to reward hacking or diversity collapse. \citet{huang2025sharpening} show that self-rewarding can only “sharpen” knowledge already covered by the base model—it cannot create new information—so performance is bounded by model coverage (i.e. its pass@k rate). \citet{song2024mind} formalizes the generation–verification gap and shows self-improvement hinges on sufficient coverage and verifier quality; when these are weak, intrinsic/self-verification stalls and fails to match correctness-based training. 
Finally, \citet{huang2025bestofn} proves that inference-time alignment with imperfect reward models suffers reward hacking and lacks guarantees under realistic coverage, again falling short of what verified rewards can achieve. Another study \citep{kayal2025intrinsic}, shows that certain intrinsic signals (like policy entropy or state novelty) can fail in high-dimensional or complex output spaces, and sometimes result in exploration that
diverges from the downstream task. Some works combine intrinsic and binary rewards 
\citep{song2025outcomebased,li2025darling} to encourage exploration. 
Yet another line of works explores using LLM-as-a-judge synthetic rewards in RL-based post-training (RLAIF), explored as an alternative to human feedback \citep{lee2024rlaif,bai2022constitutionalaiharmlessnessai} or for (semi-)verifiable domains \citep{whitehouse2025j1,jayalath2025computeteacherturninginference,simonds2025rlsrreinforcementlearningself}.

Closer to our line of work are works that use the probability or log-likelihood of the reference answer given a generated reasoning chain under the initial policy model to provide a verifier-free scoring function.
We highlight the works relevant to our setting and label them with distinctive keywords for clarity. To the best of our knowledge, none of these studies investigate log-likelihoods as a primary reward signal, with the exception of \citet{tang2025beyond}, who include it as an ablation against their proposed JEPO reward and report weaker performance. In contrast, we introduce log-likelihood rewards as a primary training signal, and our experiments consistently demonstrate their competitiveness across models and datasets, including those evaluated in prior work.

\begin{itemize}
    \item {\em VeriFree} \citep{zhou2025verifree} uses probabilities of reference answers as reward in verifiable domains
    \item  {\em JEPO} \citep{tang2025beyond} introduces  
    a Jensen-based ELBO loss with log-probs. In experiments they
    mix verifiable with non-verifiable data to show that the verifiable part improves with this loss.
    \item  {\em RLPR} \citep{yu2025rlpr} uses average probability of the ground truth for non-verifiable domains.
    \item {\em NOVER} \citep{liu2025noverincentivetraininglanguage} is a variant of probability-based rewards, using a geometric mean of per-token perplexities.
    \item {\em Reinforcement-pretraining}
    \citep{dong2025reinforcementpretraining} performs small-scale {\em
    pretraining} from scratch, inserting CoTs at specific points and
    rewarding for correct continuation over a few tokens.%
\item {\em LongForm} \citep{gurung2025learning} designs a clever reward function (VR-CLI) that allows them to use an unlabeled book dataset as a learning signal for reasoning.
\end{itemize}

%% file: method.tex
\section{Method}
\label{sec:method}

\paragraph{Context: Chain-of-thought fine-tuning via Reinforcement
Learning.} We consider the general context of fine-tuning an LLM to
improve performance on a set of questions-answers via a Chain-of-Thought
(CoT) optimized by reinforcement learning. For each prompt $p$, the
fine-tuned model should first print a CoT $z$, then an answer $a$. Then a
reward $R$ is computed depending on $a$ (such as correctness, or matching
some reference answer). Fine-tuning should optimize the expected reward.

Denoting $\pi_\theta$ the generative probabilistic model with
parameter $\theta$, and $\D$ the dataset (a distribution of questions or
prompts $p$), we want to maximize
\begin{equation}
J_\theta= \E_{p\sim \D}\, \E_{z\sim \pi_\theta(z|p),\,a\sim
\pi_\theta(a|p,z)} [R(z,a)]
\end{equation}
where $R(z,a)$ is the reward obtained for CoT $z$ and answer $a$.

This task is often tackled with RL variants of the basic Reinforce
algorithms, such as RLOO \citep{ahmadian2024back}, GRPO \citep{guo2025deepseek}, or PPO \citep{schulman2017}. 

\paragraph{RL fine-tuning with probability-based rewards.} We focus on the case when a reference
answer $a^\star$ is available for each prompt in the dataset. Then it is
possible to estimate the probability of this answer given the CoT. We
will compare RL training with several rewards derived in this setting.

For
instance, we can set a reward similar to the log-loss used during
pretraining, 
\begin{equation}
R(z,a)=\log \pi_\theta (a^\star|p,z).
\end{equation}
We call this setting
\emph{log-prob rewards}.
Given a CoT $z$, this quantity can be computed in one pass of a
transformer on the reference answer $a^\star$.
In particular, since the reward depends on $z$ and $a^\star$ but not on $a$, sampling
of an answer $a$ given the CoT $z$ is not necessary.

We also consider the \emph{average log-prob reward} variant
\begin{equation}
R(z,a)=\frac{1}{\abs{a^\star}} \log \pi_\theta (a^\star|p,z)
\end{equation}
namely, we compute the per-token log-probability by downscaling the
reward by the length $\abs{a^\star}$ of the answer. This results in a different weighting of the various data samples in the dataset.

Log-prob rewards are aligned with the pretraining phase of
LLM training, where the criterion is the log-probability of the next
token. They do not require access to a verifier, only to a
reference answer (or any continuation) in the data. Thus, they can
potentially be applied any question-answer pairs.

The logprob reward setting is also considered in \cite{tang2025beyond}, although they largely focus on a ``multi-sample'' variant. The
gradient of the expected reward is derived there as
\begin{equation}
\nabla J_\theta=\E_{p\sim \D}\,\E_{z\sim \pi_\theta(z|p),\,a\sim \pi_\theta(a|p,z)}\!\left[\log \pi_\theta(a^\star|p,z)\,\nabla \log \pi_\theta(z|p)+\nabla \log \pi_\theta(a^\star \mid p,z)\right]
\end{equation}
As noted in \citet{tang2025beyond}, the second term is analogous to a
supervised fine-tuning term that directly optimizes the log-likelihood of
the reference answer $a^\star$ given what comes before, and the first
term is a traditional Reinforce term with reward $\log
\pi_\theta(a^\star|p,z)$. For completeness, we derive this gradient in
\Cref{app:losses}, together with its application to RL algorithms such as
RLOO.

A related but different reward appears in \citet{zhou2025verifree}:
\begin{equation}
R_{\text{VeriFree}}(z,a)=\pi_\theta(a^\star|p,z)=\E_{a\sim
\pi_\theta(a|p,z)} [1_{a=a^\star}]
\end{equation}
thus, without the logarithm. This is the \emph{expected} success rate for
matching the reference answer $a^\star$: \emph{in expectation}, it is the
same as using binary rewards, namely, sampling an answer $a$ given the CoT, and setting a reward $1$ if
$a=a^\star$.
\citet{zhou2025verifree} prove that working with the expectation reduces
variance compared to sampling $a$, and this affects training dynamics.

The VeriFree reward diverges from logprob rewards when probabilities are
very small. For instance, if initially the model has an almost-zero
probability to reach the reference answer, then the VeriFree reward
produces no learning. Similarly, for long free-form answers, the
probability of an exact match with $a^\star$ is tiny, so we would expect a
difference between VeriFree and logprob rewards. On the other hand, if 
the initial probability to reach the correct answer is reasonably high,
then we expect the VeriFree and logprob rewards to be well aligned.

\paragraph{Algorithms and rewards tested.} We now give an outline of the algorithms
compared in the experiments.

For every RL algorithm except JEPO, the advantages used for the Reinforce
gradient updates are obtained by RLOO, i.e., by subtracting from the
reward a leave-one-out estimate of the mean reward estimated on a
minibatch for a given prompt; this is an unbiased version of GRPO
\citep{guo2025deepseek}.

\begin{itemize}%
\item \emph{SFT}: standard fine-tuning with the next-token cross-entropy loss.
Namely, we
omit the CoT, and fine-tune the model to predict the ground truth
directly from the prompt.

\item \emph{Base RL}: this is the most direct RL method. For each prompt $p$, we
sample a CoT $z\sim \pi_\theta(z|p)$, then an answer $a\sim
\pi_\theta(a|p,z)$, and check whether the answer is correct:
\begin{equation}
R_\text{RLOO}(z,a)=1_{a=a^\star}
\label{eq:RLOO}
\end{equation}
given the reference answer $a^\star$. As for all other RL methods, we
employ a leave-one-out advantage estimation (RLOO).

\item \emph{Probability} (VeriFree): As mentioned above, the reward is
\begin{equation}
R_{\text{Probability}}(z,a)=\pi_\theta(a^\star|p,z)=\E_{a\sim
\pi_\theta(a|p,z)} [1_{a=a^\star}]
\label{eq:verifree}
\end{equation}
namely, instead of sampling an answer $a$ from the model, we directly
compute the probability of the reference answer $a^\star$ given $z$ using
the model $\pi_\theta$.

\item \emph{Average prob} (AvgProb): Similarly to RLPR \citep{yu2025rlpr}, the reward is
set to the \emph{average per-token probabilities} of the reference
answer:
\begin{equation}
R_{\text{avgprob}}(z,a)=\frac{1}{\abs{a^\star}}\sum_{t=1}^{\abs{a^\star}}
\pi_\theta(a^\star_t|p,z,a^\star_{[1:t-1]})
\label{eq:avg_prob}
\end{equation}

\item \emph{Log-prob}: the reward is
\begin{equation}
R_{\text{log-prob}}(z,a)=\log \pi_\theta(a^\star|p,z)
\label{eq:logprob}
\end{equation}
namely, we directly compute the log-likelihood of the reference answer
$a^\star$ given $z$.

\item \emph{Average log-prob} (AvgLogprob): In log-probs, longer answers have rewards of a
bigger magnitude, since $\log \pi_\theta(a^\star|p,z)$ is a sum over all
tokens in $a^\star$. Average log-probs rescales the reward accordingly:
\begin{equation}
R_{\text{avglogprob}}(z,a)=\frac{1}{\abs{a^\star}}\log \pi_\theta(a^\star|p,z)
\label{eq:avg_logprobs}
\end{equation}
where $\abs{a^\star}$ is the number of tokens in $a^\star$. Compared to
log-probs, this just means that different answers in the dataset are
weighted in a different way.

\item \emph{JEPO} \citep{tang2025beyond} used a
refined version of the group reward in GRPO and RLOO, by noting that the
expected log-probability $\E_{z\sim \pi_\theta(z|p)} \log
\pi_\theta(a^\star|p,z)$ is an underestimate of the actual
log of the probability to get $a^\star$ using $\pi_\theta$, which is
$\log \E_{z\sim \pi_\theta(z|p)}\pi_\theta(a^\star|p,z)$. So, starting
from GRPO, they introduce a group-level reward based on $G$ samples
$z_1,\ldots,z_G$ for a given prompt,
\begin{equation}
R(z_1,\ldots,z_G)=\log \frac{1}{G}\sum_{i=1}^G \pi_\theta(a^\star|p,z_i).
\label{eq:JEPO}
\end{equation}
Compared to log-probs over a similar minibatch $z_i$, the reward is the
log-mean-exp of rewards in the minibatch. For Reinforce advantage
estimation,
they subtract the similar estimate over $G-1$ samples without the sample
$z_i$. We will use $G=4$ as in
\citet{tang2025beyond}. %

\end{itemize}

\paragraph{Success metrics.} For each algorithm, we report several
success metrics. These metrics largely follow the quantities tracked by
the different algorithms.

We denote by $\D$ the distribution of
prompts and reference answers in the dataset.

\newcommand{\CoT}{^\text{CoT}}

Given a prompt $p$, the probability to obtain the correct answer using a
CoT model $\pi$ is
\begin{equation}
\pi\CoT(a^\star|p)=\E_{z\sim \pi(z|p)} \left[
\pi(a^\star|p,z)
\right].
\end{equation}

\begin{itemize}%
\item \emph{Success rate}: This is the probability to get a correct answer,
averaged over the dataset,
\begin{equation}
\E_{(p,a^\star)\sim \D} \left[
\pi\CoT(a^\star|p)\right].
\label{eq:success}
\end{equation}
It can be estimated directly by sampling a prompt and answer in the
dataset, sampling a CoT $z$, and computing $\pi_\theta(a^\star|p,z)$.
This is the estimate we report.
VeriFree and Base RL directly optimize the success rate. We consider two modes
for generating the answers given a prompt and chain of thought:
\emph{Greedy success}, where the most likely token is used at each step,
and \emph{$T=1$ sampling success} from the softmax probabilities at
temperature $T=1$.

\item \emph{Log-probability}: This is a family of metrics that aggregate the likelihood of answer tokens across the dataset,
\begin{equation}
\E_{(p,a^\star)\sim \D} \left[\log \pi\CoT(a^\star|p)\right].
\end{equation}

To keep these quantities comparable, we consider two averaging schemes – {\em per-token} and {\em per-answer}.

\begin{itemize}
    \item \emph{Per-token log-probabilities} sums the log-probabilities of all answer tokens in the dataset, and divides by the total number of those tokens. Equivalently
\begin{equation}
\frac{1}{\E_{(p,a^\star)\in \D}
[\abs{a^\star}]} \, \E_{(p,a^\star)\in \D} \left[
\log \pi\CoT(a^\star|p)
\right]
\end{equation}

\item \emph{Per-answer log-probabilities} averages across each answer, then averages over the dataset:
\begin{equation}
\E_{(p,a^\star)\in \D}
\left[
\frac{1}{\abs{a^\star}} \log \pi\CoT(a^\star|p)
\right]
\end{equation}

\end{itemize}

However, these metrics are difficult to estimate directly due to the expectation over $z$ inside the $\log$, since $\pi\CoT$ is an expectation.
A simple solution is to estimate the average via Monte Carlo using $N$
samples \citep{tang2025beyond}:
\begin{equation}
\E_{(p,a^\star)\sim \D} \left[
\log \frac{1}{N} \sum_{i=1}^N \pi(a^\star|p,z_i)
\right]
\end{equation}
where the $z_i$ are sampled from $\pi(z_i|p)$. We refer to this as
\emph{logprob-MC$N$}. We apply this modification to both per-answer and per-token averaged logprobs.

We will use both the ``naive'' estimate \emph{logprob-MC1} with $N=1$, and a more precise estimate, {\em logprob-MC32}, computed less frequently during training.

For supervised fine-tuning (SFT) with no CoT, this is irrelevant as there is no expectation over $z$, and we can report $\log \pi(a^\star|p)$ directly.

The MC estimate is always an \emph{underestimate} of the actual logprob $\log
\pi\CoT(a^\star|p)$, since $\log$ is concave. This should be kept in mind
when comparing logprob-MC1 to SFT log-probabilities.

\item We also report \emph{perplexity}, which is just the exponential of
minus per-answer log-probabilities. Technically, this corresponds to {\em per answer perplexity-MC1}; we shorten to {\em perplexity}. This is also equal to the geometric
mean of the perplexity of the answer for each prompt in the dataset. 

\item \emph{Average CoT length}: We also report the average length of the
CoTs used by a model,
\begin{equation}
\E_{(p,a^\star)\in \D} \,\E_{z\sim \pi(z|p)} \left[\abs{z}\right]
\end{equation}
as a relevant quantity for analysis. Note that this includes formatting
tokens.

\end{itemize}

%% file: experiments.tex
\section{Experimental Results}

\subsection{Setup: Datasets, Models, and Protocol}

{\bf Models.}
We evaluate on two instruction-tuned models:
\textsc{Llama-3.2-3B-Instruct} ~\citep{dubey2024llama},
and \textsc{Qwen-2.5-3B-Instruct}~\citep{Yang2024Qwen25TR}.

{\bf Datasets.}
We consider two \emph{verifiable} math benchmarks and two \emph{non-verifiable} long-form datasets.
(i) \textbf{MATH}~\citep{HendrycksBKABTS21}:We report accuracy on the official test split. The resulting training set contains $\sim$7{,}000 short-answer problems.
(ii) \textbf{DeepScaleR (Preview)}~\citep{deepscaler2025}: we hold out a random $10\%$ for validation to report performance. The training set has $\sim$39{,}000 short-answer problems.
(iii) \textbf{Alpaca (cleaned)}~\citep{alpaca}: we use the standard cleaned variant; $1{,}000$ random examples are used for validation, leaving $\sim$50{,}000 training samples with predominantly long-form answers.
(iv) \textbf{NuminaProof}: starting from \textsc{NuminaMath-1.5}~\citep{numina_math_datasets}, we filter for theorem–proof style items. We reserve $1{,}000$ examples for validation, yielding $\sim$50{,}000 long-form training samples. More detail in \Cref{app:expdetail}.

{\bf Algorithms tested.} 
We compare the algorithms mentioned in Section~\ref{sec:method}, namely, SFT and the following RL variants: Base
RL, Probability (VeriFree), Logprob, AvgLogprob, AvgProb, and JEPO. These
differ by the rewards used, as described in Section~\ref{sec:method}. Details in \Cref{app:expdetail}.

{\em Verifiable.} We run experiments with all methods on verifiable
domains with RLOO group size $G=4$ and $G=32$, except for JEPO, where we only run with group size
$G=4$ (the value used in
\citet{tang2025beyond}) as JEPO is harder
to implement efficiently for larger $G$\footnote{Because the JEPO
reward depends on the whole group and cannot be computed for each sample
independently, efficient implementation with large $G$ is more delicate.}.
In the loss function we include a KL divergence regularization term as proposed by ~\citet{guo2025deepseek} with a coefficient of 0.001.

{\em Non-verifiable.} In non-verifiable domains, we run with $G=4$
throughout. Here, we do not use a KL divergence term in the main results,
but we explore its impact %
in the ablations.

\subsection{Results on Verifiable Domains}

\input{tables/verifiable_g32}

We present the results on verifiable domains in Table
\ref{tab:verifiable_g32}, and
\Cref{fig:llama_math_g32,fig-app:qwen_math_g32,fig-app:llama_ds_g32,fig-app:qwen_ds_g32}
in \Cref{app:experiments} for $G=32$. The results for $G=4$ (including
JEPO) appear in \Cref{tab:verifiable_g4} and \Cref{fig-app:llama_math_g4,fig-app:qwen_math_g4,fig-app:llama_ds_g4,fig-app:qwen_ds_g4} in \Cref{app:experiments}.

The key takeaway is that 
\emph{all RL variants based on ground-truth answers have similar success
rates}
for greedily decoded answers. More precisely, all (log-)probability-based
variants perform better than Base RL when run with standard group size $G=32$.

\begin{figure}[h]
    \centering
    \includegraphics[width=\linewidth]{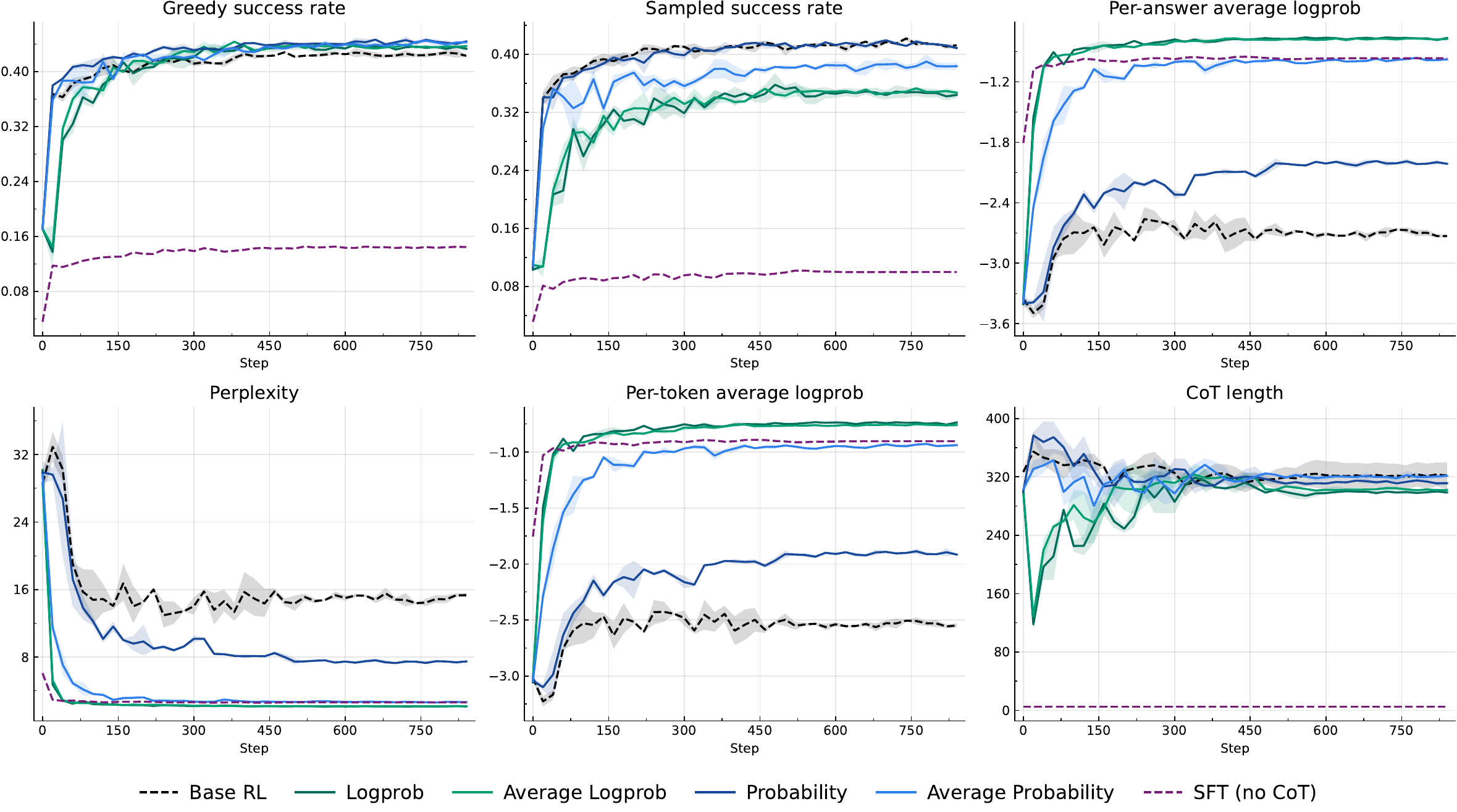}
    \caption{\small{\bf Verifiable. Llama 3.2 3B Instruct on MATH, G=32.} Learning curves of our algorithms for various metrics. Dashed curves represent the RL baseline and (no-CoT) SFT; green shades for the logprob family of rewards (Logprob, Average logprob and JEPO) and blue for probability-based rewards {\em Probability (VeriFree)} and {\em Average Probability} (RLPR). Numerical values can be found in \Cref{tab:verifiable_g32}.}
    \label{fig:llama_math_g32}
\end{figure}

Sampling answers at temperature $T=1$ generally makes performance worse
across the board. It also affects the ranking of methods: 
methods that use logprobs or average logprobs underperform both Base RL and the Prob variant.
We believe $T=1$ sampling is the reason why logprobs did not perform well
on MATH in \cite{tang2025beyond}. 
Overall, we do not detect any strong difference between JEPO and simple Logprob when greedy sampling is used. Conceptually, JEPO is a
more precise, more computationally heavy version of Logprob (larger $N$
for Monte Carlo estimation of log-probabilities, see
Section~\ref{sec:method}). The additional complexity is not justified in
our setting.

The picture shifts when we consider \emph{perplexity} in addition to success
rate: here, only Logprob,
AvgLogprob and JEPO achieve good perplexities, improving SFT by a
significant margin on this criterion.  This is new evidence of the
interest of a CoT for these domains.
Perplexity may not be the metric of
most direct interest for verifiable questions, but it nevertheless
informs us on the qualitative behavior of different models. Base RL and
Prob yield very poor perplexities: a prob-trained model makes little difference between predicting
a wrong answer with probability $0.99$ or $1$ and giving the correct
answer with probability $0.01$ or $0$, while this makes a large difference for
log-probabilities.
On the other hand, logprob-trained models make sure
that if they are wrong, they are not \textit{confidently} wrong, by
attributing some nonzero probability to all plausible answers.

Overall, logprob-trained models get both good success rates and
good perplexity, while models trained directly for the success rate
sacrifice perplexity.  Presumably, logprob-trained models smooth out
their predictions, while verifier or probability-based variants emit
``sharper'' probabilities.

\subsection{Results on Non-verifiable Domains}

\input{tables/nonverifiable_nonreg}

\begin{figure}
    \centering
    \includegraphics[width=\linewidth]{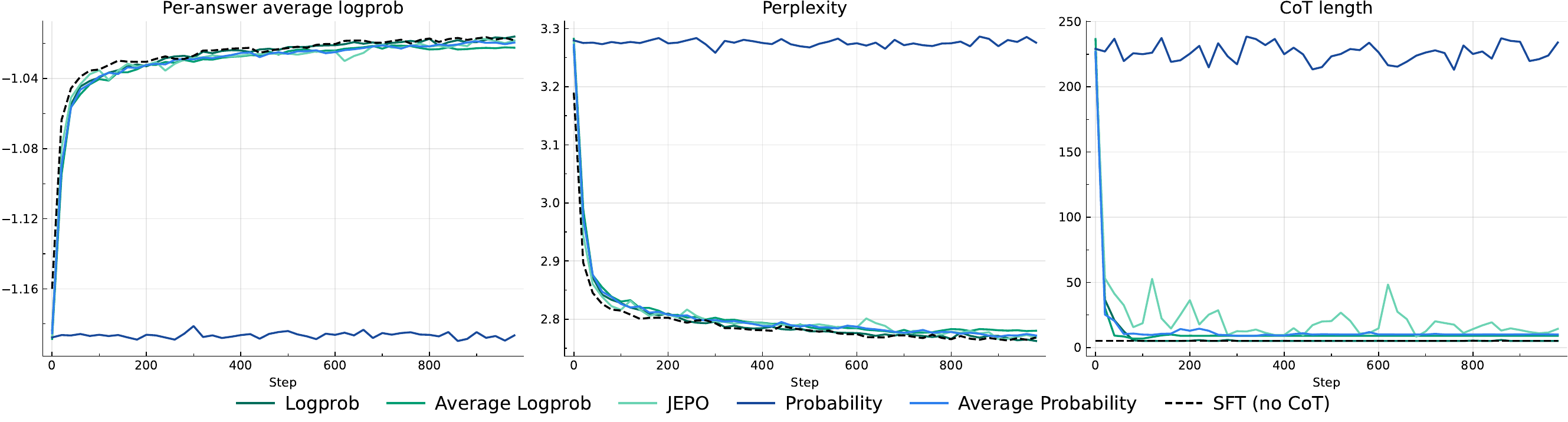}
    \caption{\small {\bf Non-verifiable: Qwen 2.5 3B Instruct on NuminaProof.} Learning curves of our algorithms for three metrics. Numerical values can be found in \Cref{tab:nonverifiable_base}. Log-prob family models match the (per-answer) average log-prob and perplexity from SFT, while probability rewards fail to improve on these metrics due to the sparsity of the rewards. We observe a rapid ``collapse" in CoT-length for the log-prob family. }
    \label{fig:qwen_numina_base}
    \vspace{-12pt}
\end{figure}

We present the results on NuminaProof and Alpaca with the Llama and Qwen models in Table~\ref{tab:nonverifiable_base} and \Cref{fig:qwen_numina_base} and \Cref{fig-app:llama_numina_base,fig-app:llama_alpaca_base,fig-app:qwen_alpaca_base} in \Cref{app:experiments}. We observe that training with logprobs, or with average logprobs or JEPO, consistently matches the performance of SFT. As predicted, {\em Probability} (VeriFree) fails to improve on these metrics, and {\em Average Probability} (RLPR) is noisier but trails the logprob family closely. This establishes the log-prob family of rewards as a universal method for both verifiable and non-verifiable domains.

\subsection{Length of the Chain-of-Though During Training}
\label{sec:length}

We now report some intriguing observations on the behavior of the CoT during training, for which we have no complete explanation. 
In Figure~\ref{fig:llama_math_g32}, we see that CoTs trained with Logprob variants show an
initial dip in length, followed by a recovery in verifiable domains. This
pattern does not occur for Prob variants or Base RL.
For non-verifiable rewards, we see an even starker pattern in Figure~\ref{fig:qwen_numina_base}: the CoT dips to a length of $~10$ tokens (including formatting tokens) and never recovers. This means that the CoT is largely eliminated, and Logprob methods effectively become SFT -- indeed we observe that the perplexity of methods with a collapsed CoT closely match those of the SFT baseline.

To understand the mechanism behind the CoT length dip, we hypothesized that early in training, shorter CoTs might lead to better predictions since the base model has been trained without CoTs. An initial negative correlation between CoT length and reward may push the model towards shorter CoTs during reinforcement learning.
Indeed, we find that on average over questions in a dataset, the initial
model exhibits a negative correlation between CoT length and
log-probability of the correct answer for a given question
(Appendix~\ref{sec:correlation}, Figs.~\ref{fig-app:numina_corr_color} and
\ref{fig-app:math_corr_color}), both for Math and NuminaProof. This
explains the initial drive towards shorter CoTs when doing RL on logprob
rewards. This correlation is not present with probability rewards
(Fig.~\ref{fig-app:mathprob_corr_color}), as the signal is visibly
``squashed'' for small probabilities.

We tried two types of interventions on this pattern: increasing the KL divergence regularization to the base model, and introducing a length penalty that adds a negative reward for every token below a certain threshold in the CoT (see \Cref{app:KL_length} for details). These interventions worked in that they prevented the CoT length dip, but this came at the cost of actual performance, as shown in \Cref{fig-app:ablations_l3b_numina,fig-app:ablations_q3b_numina,fig-app:ablations_l3b_alpaca,fig-app:ablations_q3b_alpaca} in \Cref{app:KL_length}.  

We also considered that the SFT term might overpower the RL part, as the
model is not used to writing long proofs; or that, initially, the mere presence of a CoT
might affect the quality of the answer negatively (eg, due to distance
from the question). 
To address these two points, we produced a ``warm-start'' model by SFT-ing
on the answers in the presence of CoTs (with masked gradients for the CoT
part). Such a warm-start model can be used in turn for RL fine-tuning.
We describe these results in \Cref{app:warmstart}: this stabilizes CoT
length, but final performance only matches the SFT baseline, without beating
it under a reasonable compute budget.

It is worth noting that in a similar setting, \citet{tang2025beyond} show JEPO eventually exceeding the SFT performance with long-form answers. The critical difference is that \citet{tang2025beyond} train for significantly longer (an order of magnitude) at a larger batch size and lower learning rate.  So it is possible that JEPO enables training with long-form answers, at the cost of much higher compute requirements compared to RL with short-form, verifiable answers.

\paragraph{Discussion: Why don't CoTs improve performance for non-verifiable domains?} 
We can put forward several hypothetical explanations for the apparent
lack of improvement from CoT in non-verifiable domains.
One possibility is that it takes longer to train long CoTs than short CoTs. RL has a worse signal-to-noise ratio when the number of actions increases, because of the well-known ``credit-assignment problem''. 
For CoT training, the actions are the tokens, so it is harder to identify good correlations in long CoTs than short ones. 
If this is the case, then a correlation between short CoTs and better
performance would be present in the early phases of training, only to disappear once long CoTs catch up in performance. 
This could explain the dip-and-recovery pattern for verifiable domains.
However, this does not explain why this pattern occurs for Log-prob but not for Prob in verifiable domains. Also, in this situation, we would expect the length penalties %
to help.

Another possibility is that, with long answers, the model has time to deploy a hidden CoT within its internal layers. 
Indeed, the survey by \citet{zhu2025surveylatentreasoning}  puts forward increasing evidence for the existence of such hidden CoTs in LLMs. Conversely, for the very short answers in verifiable domains, there may be too few tokens to have an efficient internal CoT during the answer, and an actual, non-hidden CoT may be necessary. 
If this is the case, it may be interesting to build datasets that interpolate between short and long answers, and see if there is a transition. We leave these investigations to future work.

%% file: tables/verifiable_g32.tex
\begin{table*}[t]
\centering
\footnotesize
\setlength{\tabcolsep}{5pt}
\renewcommand{\arraystretch}{1.15}
\resizebox{\textwidth}{!}{%
\begin{tabular}{lccccccc}
\toprule
 & Base model & Base RL & Log-prob & Avg Logprob & Probability & Avg Probability & SFT (no CoT) \\
\midrule
\addlinespace\textbf{Llama 3B, MATH} &  &  &  &  &  &  &  \\
Greedy success & 17.13 ± 0.00 & 42.74 ± 0.66 & 43.30 ± 0.10 & 43.66 ± 0.25 & 44.19 ± 0.57 & 43.95 ± 0.44 & 14.43 \\
T=1 Sampled Success & 10.77 ± 0.27 & 41.36 ± 0.15 & 34.66 ± 0.81 & 34.83 ± 0.00 & 41.39 ± 0.09 & 38.38 ± 0.74 & 10.00 \\
Average log-prob MC32 & — & — & -0.68 ± 0.00 & -0.69 ± 0.01 & -1.32 ± 0.01 & -0.77 ± 0.00 & -0.97 \\
Average log-prob & -4.21 ± 0.00 & -2.63 ± 0.02 & -0.79 ± 0.03 & -0.81 ± 0.04 & -1.97 ± 0.04 & -1.08 ± 0.03 & -0.97 \\
Perplexity & 67.29 ± 0.00 & 13.87 ± 0.34 & 2.21 ± 0.06 & 2.25 ± 0.09 & 7.14 ± 0.26 & 2.95 ± 0.08 & 2.63 \\
CoT length & 326.77 ± 0.71 & 321.01 ± 24.42 & 298.97 ± 2.82 & 302.74 ± 2.21 & 313.58 ± 1.87 & 320.17 ± 5.69 & 5.00 \\
\addlinespace\textbf{Qwen 3B, MATH} &  &  &  &  &  &  &  \\
Greedy success & 21.19 ± 0.00 & 55.85 ± 0.46 & 56.84 ± 0.21 & 56.11 ± 0.20 & 56.46 ± 0.05 & 57.41 ± 0.16 & 18.32 \\
T=1 Sampled Success & 16.63 ± 0.30 & 55.36 ± 0.17 & 44.18 ± 0.42 & 42.45 ± 0.51 & 53.32 ± 0.22 & 47.34 ± 0.87 & 12.00 \\
Average log-prob MC32 & — & — & -0.39 ± 0.00 & -0.40 ± 0.01 & -1.03 ± 0.01 & -0.42 ± 0.00 & — \\
Average log-prob & -2.19 ± 0.00 & -2.11 ± 0.10 & -0.44 ± 0.02 & -0.50 ± 0.00 & -1.77 ± 0.01 & -0.68 ± 0.05 & -0.69 \\
Perplexity & 8.92 ± 0.00 & 8.25 ± 0.80 & 1.55 ± 0.03 & 1.64 ± 0.00 & 5.85 ± 0.03 & 1.97 ± 0.09 & 1.99 \\
CoT length & 222.35 ± 0.73 & 451.83 ± 14.14 & 381.49 ± 10.22 & 372.01 ± 18.76 & 429.35 ± 1.25 & 419.51 ± 4.31 & 5.00 \\
\addlinespace\textbf{Llama 3B, DeepScaleR} &  &  &  &  &  &  &  \\
Greedy success & 10.05 ± 0.00 & 28.55 ± 0.14 & 32.40 ± 1.02 & 32.20 ± 1.37 & 31.75 ± 1.56 & 30.75 ± 1.02 & 9.88 \\
T=1 Sampled Success & 5.20 ± 0.14 & 28.54 ± 0.35 & 17.97 ± 0.98 & 18.32 ± 0.04 & 28.47 ± 0.03 & 21.71 ± 0.50 & 5.14 \\
Average log-prob MC32 & — & -2.01 ± 0.29 & -0.79 ± 0.01 & -0.80 ± 0.00 & -1.63 ± 0.03 & -0.85 ± 0.01 & -1.01 \\
Average log-prob & -4.92 ± 0.00 & -2.77 ± 0.48 & -0.83 ± 0.04 & -0.84 ± 0.02 & -2.32 ± 0.15 & -1.31 ± 0.08 & -1.01 \\
Perplexity & 137.25 ± 0.00 & 16.89 ± 7.75 & 2.30 ± 0.09 & 2.32 ± 0.05 & 10.28 ± 1.49 & 3.71 ± 0.30 & 2.74 \\
CoT length & 354.91 ± 1.28 & 294.52 ± 22.73 & 437.11 ± 0.41 & 371.88 ± 11.88 & 340.01 ± 6.16 & 335.89 ± 19.74 & 5.00 \\
\addlinespace\textbf{Qwen 3B, DeepScaleR} &  &  &  &  &  &  &  \\
Greedy success & 14.02 ± 0.00 & 38.30 ± 2.40 & 37.92 ± 0.11 & 38.12 ± 0.83 & 38.68 ± 0.67 & 39.20 ± 1.56 & 11.52 \\
T=1 Sampled Success & 11.00 ± 0.42 & 38.56 ± 0.35 & 25.79 ± 0.43 & 25.82 ± 0.44 & 36.83 ± 0.07 & 28.79 ± 0.59 & 6.48 \\
Average log-prob MC32 & — & -2.38 ± 0.00 & -0.56 ± 0.00 & -0.57 ± 0.01 & -1.68 ± 0.05 & -0.59 ± 0.00 & -0.76 \\
Average log-prob & -2.57 ± 0.00 & -3.55 ± 0.14 & -0.59 ± 0.01 & -0.70 ± 0.01 & -2.73 ± 0.03 & -0.85 ± 0.02 & -0.76 \\
Perplexity & 13.12 ± 0.00 & 34.85 ± 4.93 & 1.80 ± 0.02 & 2.02 ± 0.01 & 15.28 ± 0.42 & 2.33 ± 0.05 & 2.14 \\
CoT length & 218.12 ± 7.35 & 453.69 ± 7.81 & 532.79 ± 36.46 & 463.46 ± 0.11 & 481.15 ± 3.03 & 460.00 ± 6.25 & 5.00 \\
\bottomrule
\end{tabular}

}
\caption{\small {\bf Results on verifiable domains, G=32.} 
Final performance of models  across all our algorithms and metrics.
Results are averaged over two seeds. Rows are labeled by the test
metrics, columns by the algorithms. We observe that methods which use the
log-probability as a reward (Log-prob, Avg Logprop, JEPO) often
underperform the baseline when the answer is sampled. However, the gap
closes when the answer is produced deterministically ({\em greedy
success}). Perplexity and log-prob based metrics universally improve for
the log-prob family of rewards, clearly surpassing SFT levels, while base RL lags behind in this metric, and probability-based rewards situate themselves in the middle between those.
Learning curves are shown in \Cref{fig:llama_math_g32,fig-app:qwen_math_g32,fig-app:llama_ds_g32,fig-app:qwen_ds_g32}. }
\label{tab:verifiable_g32}
\vspace{-15pt}
\end{table*}

%% file: tables/nonverifiable_nonreg.tex
\begin{table*}[t]
\centering
\scriptsize
\setlength{\tabcolsep}{5pt}
\renewcommand{\arraystretch}{1.15}
\resizebox{\textwidth}{!}{%
\begin{tabular}{lccccccc}
\toprule
 & Base model & Log-prob & Avg Logprob & Probability & Avg Probability & JEPO & SFT (no CoT) \\
\midrule
\addlinespace\textbf{Llama 3B, NuminaProof} &  &  &  &  &  &  &  \\
Per-answer avg logprob & -1.2871 & -1.1124 & -1.1175 & -1.2859 & -1.1157 & \textbf{-1.1104} & -1.1127 \\
Perplexity & 3.62 & \textbf{3.04} & 3.06 & 3.62 & 3.05 & \textbf{3.04} & \textbf{3.04} \\
Per-answer avg logprob MC32 & — & -1.1124 & -1.1175 & -1.2850 & -1.1157 & \textbf{-1.1102} & -1.1127 \\
CoT length & 474.0 & 9.1 & 9.0 & 469.6 & 9.1 & 34.1 & 5.0 \\
\addlinespace\textbf{Qwen 3B, NuminaProof} &  &  &  &  &  &  &  \\
Per-answer avg logprob & -1.1838 & -1.0174 & -1.0235 & -1.1862 & -1.0217 & -1.0218 & \textbf{-1.0172} \\
Perplexity & 3.27 & \textbf{2.77} & 2.78 & 3.27 & 2.78 & 2.78 & \textbf{2.77} \\
Per-answer avg logprob MC32 & — & -1.0174 & -1.0235 & -1.1852 & -1.0217 & -1.0218 & \textbf{-1.0172} \\
CoT length & 225.6 & 5.3 & 9.0 & 225.2 & 10.0 & 14.1 & 5.0 \\
\addlinespace\textbf{Llama 3B, Alpaca} &  &  &  &  &  &  &  \\
Per-answer avg logprob & -1.3493 & -0.9397 & -0.9449 & -1.5772 & -0.9513 & -0.9443 & \textbf{-0.9381} \\
Perplexity & 3.85 & \textbf{2.56} & 2.57 & 4.84 & 2.59 & 2.57 & \textbf{2.56} \\
Per-answer avg logprob MC32 & — & -0.9396 & -0.9449 & -1.5724 & -0.9512 & -0.9436 & \textbf{-0.9381} \\
CoT length & 134.2 & 14.2 & 14.1 & 58.6 & 9.1 & 14.8 & 5.0 \\
\addlinespace\textbf{Qwen 3B, Alpaca} &  &  &  &  &  &  &  \\
Per-answer avg logprob & -1.3968 & \textbf{-0.8903} & -0.8933 & -1.2982 & -0.8989 & -0.8976 & -0.8905 \\
Perplexity & 4.04 & \textbf{2.44} & \textbf{2.44} & 3.66 & 2.46 & 2.45 & \textbf{2.44} \\
Per-answer avg logprob MC32 & — & \textbf{-0.8902} & -0.8931 & -1.2955 & -0.8988 & -0.8969 & -0.8905 \\
CoT length & 83.7 & 16.7 & 14.3 & 16.6 & 15.2 & 16.6 & 5.0 \\
\bottomrule
\end{tabular}

}
\caption{\small {\bf Results on non-verifiable domains.} Final performance across all initial models and metrics, on non-verifiable datasets. Probability rewards fail to learn due to their extremely low rewards. We observe that methods which use the log-probability experience a CoT collapse, reducing to SFT.  The corresponding learning curves are shown in \Cref{fig:qwen_numina_base} and \Cref{fig-app:llama_numina_base,fig-app:llama_alpaca_base,fig-app:qwen_alpaca_base} in \Cref{app:experiments}.}
\label{tab:nonverifiable_base}
\vspace{-12pt}
\end{table*}

%% file: conclusion.tex
\section{Conclusion}%

Our work establishes log-probability rewards as a unifying training
signal effective in both verifiable and non-verifiable domains, without
relying on ground-truth correctness labels. On reasoning benchmarks like
MATH and DeepScaleR, log-probability rewards match the success rates of
standard 0/1 RL objectives while substantially improving perplexity; on
long-form proofs, they match supervised fine-tuning while other
probability-based variants fall well below. This shows that the
same criterion can be carried seamlessly across settings. This highlights
their potential as a general recipe for post-training reasoning LLMs,
valid over the full range of possible answer types. In future work, we hope to further develop this approach on non-verifiable domains, enabling efficient RL training on any dataset, leveraging the CoT capabilities.

\paragraph{\em Acknowledgement.} JK thanks the Simons Foundation for support through the Collaborative Grant ``The Physics of Learning and Neural Computation''.

%% file: proofs.tex
\section{Losses and Advantages for the Rewards Considered}\label{app:losses}

\begin{lemma}
Let $z$ be a chain-of-thought variable sampled from a model $\pi_\theta$
with parameters $\theta$, and let $R_\theta(z)$ be a reward function that
depends on $z$ and also possibly on $\pi_\theta$ (for instance,
$R_\theta(z)=\log \pi_\theta(a^\star|z)$ or
$R_\theta(z)=\pi_\theta(a^\star|z)$).

Then the expected
reward
\begin{equation}
J_\theta=\E_{z\sim \pi_\theta} [R_\theta(z)]
\end{equation}
has the same gradients (up to sign) as the loss function
\begin{equation}
L_\theta=\E_{z\sim \pi_\theta\sg} \left[
(R_\theta(z) -c_\theta)\sg \log \pi_\theta(z)+R_\theta(z)
\right]
\end{equation}
where ${}\sg$ denotes a stop-grad operator, and $c_\theta$ is any
expression
independent of $z$.
\end{lemma}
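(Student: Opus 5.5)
The plan is to differentiate $J_\theta$ directly, writing the expectation as a sum (or integral) over CoTs $z$ so that both places where $\theta$ appears are visible. Then I will check that differentiating $L_\theta$, with the stop-grad conventions, gives the same expression. Write
\begin{equation}
J_\theta=\sum_z \pi_\theta(z)\,R_\theta(z).
\end{equation}
By the product rule, $\nabla J_\theta=\sum_z \nabla\pi_\theta(z)\,R_\theta(z)+\sum_z\pi_\theta(z)\,\nabla R_\theta(z)$. In the first term I apply the log-derivative identity $\nabla\pi_\theta(z)=\pi_\theta(z)\nabla\log\pi_\theta(z)$. This gives
\begin{equation}
\nabla J_\theta=\E_{z\sim\pi_\theta}\left[R_\theta(z)\,\nabla\log\pi_\theta(z)+\nabla R_\theta(z)\right].
\end{equation}
For $R_\theta(z)=\log\pi_\theta(a^\star|z)$, this is exactly the gradient formula quoted in \Cref{sec:method} from \citet{tang2025beyond}.

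Next I insert the baseline $c_\theta$. Because $c_\theta$ does not depend on $z$,
\begin{equation}
\E_{z\sim\pi_\theta}\left[c_\theta\nabla\log\pi_\theta(z)\right]=c_\theta\sum_z\nabla\pi_\theta(z)=c_\theta\nabla\sum_z\pi_\theta(z)=c_\theta\nabla 1=0.
\end{equation}
So $R_\theta(z)$ can be replaced by $R_\theta(z)-c_\theta$ in the Reinforce term without changing $\nabla J_\theta$. This holds even if $c_\theta$ depends on $\theta$, since only its value at the current $\theta$ enters.

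Finally I compute $\nabla L_\theta$. Because of the stop-grads, the sampling distribution $\pi_\theta\sg$ and the factor $(R_\theta(z)-c_\theta)\sg$ are treated as constants. The gradient therefore falls only on $\log\pi_\theta(z)$ and on the bare $R_\theta(z)$:
\begin{equation}
\nabla L_\theta=\E_{z\sim\pi_\theta}\left[(R_\theta(z)-c_\theta)\nabla\log\pi_\theta(z)+\nabla R_\theta(z)\right].
\end{equation}
This coincides with $\nabla J_\theta$ as computed above. The phrase ``up to sign'' only reflects the convention of minimizing $-L_\theta$ (a loss) rather than maximizing $J_\theta$ (an expected reward).

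None of the steps is computationally hard. The step needing most care is stating precisely what stop-grad means: the operator is the identity on values but has zero derivative, and the expectation under $\pi_\theta\sg$ is a fixed distribution whose numerical value equals $\pi_\theta$ at the current parameters. With that convention fixed, the equality of the two gradients is immediate. For the argument to be rigorous, I also need regularity assumptions allowing differentiation under the sum. These are trivial for finite-vocabulary, bounded-length CoTs, where the sum over $z$ is finite. I would add a remark that the same identity holds per prompt $p$ and then after averaging over $p\sim\D$.
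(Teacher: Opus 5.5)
Your proof is correct and follows essentially the same route as the paper: the product rule plus the log-derivative trick give $\E_{z\sim\pi_\theta}[R_\theta(z)\nabla\log\pi_\theta(z)+\nabla R_\theta(z)]$, and the score-function identity $\E_{z\sim\pi_\theta}[\nabla\log\pi_\theta(z)]=0$ then lets you insert the baseline $c_\theta$. Your explicit computation of $\nabla L_\theta$ under the stop-grad convention, and your remark on regularity, only spell out steps that the paper leaves implicit.
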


For instance, in RLOO, $c_\theta$ is the average of $R_\theta(z')$
over samples $z'\sim \pi_\theta$ independent from $z$.

\begin{proof}
The gradient of $J_\theta$ is
\begin{align}
\nabla_\theta \E_{z\sim
\pi_\theta}[R_\theta(z)]
&=\nabla_\theta\sum_z \pi_\theta(z)R_\theta(z)
\\&=\sum_z (\pi_\theta(z) \nabla_\theta \log
\pi_\theta(z))R_\theta(z)+\sum_z \pi_\theta(z) \nabla_\theta
R_\theta(z)
\\&=\E_{z\sim \pi_\theta}\left[
R_\theta(z)\nabla_\theta \log
\pi_\theta(z))+\nabla_\theta R_\theta(z)
\right]
\end{align}
hence the statement without $c_\theta$.

Now we have $\E_{z\sim p_\theta} \nabla_\theta \log \pi_\theta(z)=\sum_z
\pi_\theta(z) \nabla_\theta\log \pi_\theta(z)=\sum_z \nabla_\theta
\pi_\theta(z)=\nabla_\theta 1=0$. Therefore, we have $\E_{z\sim
\pi_\theta}[c_\theta \nabla_\theta \log \pi_\theta(z)]=0$ as long as
$c_\theta$ is independent of $z$. Hence we can subtract $c_\theta
\nabla_\theta \log \pi_\theta(z)$ from the expression above, which leads to the conclusion.
\end{proof}

\section{Experimental details}\label{app:expdetail}

For each experiment, we use a synchronous implementation of RLOO running in parallel across 8 processes. We use the AdamW~\citep{kingma2014adam} optimizer with a learning rate of $10^{-5}$, and a cosine schedule with a 20 step warm-up. 
During our research, we tried a few learning rates for the probability rewards, but noticed that the chosen value worked consistenly for all variants.
We clip the global gradient norm to a global threshold of $1.0$. Each batch contains $8$ questions from the dataset with $G$ different CoTs; such a batch corresponds to one {\em step} in all our figures. 

{\bf Full Details on the Datasets.}
We consider two \emph{verifiable} math benchmarks and two \emph{non-verifiable} long-form datasets.
(i) \textbf{MATH}~\citep{HendrycksBKABTS21}: we concatenate all official subsets, parse the final answer from \verb|\boxed{...}|, discard intermediate solutions, and hold out a random $10\%$ for validation. We report accuracy on the official test split. The resulting training set contains $\sim$7{,}000 short-answer problems.
(ii) \textbf{DeepScaleR (Preview)}~\citep{deepscaler2025}: we discard long solutions, use the provided final answer as ground truth, hold out a random $10\%$ for validation, and report performance on this held-out set. The training set has $\sim$39{,}000 short-answer problems.
(iii) \textbf{Alpaca (cleaned)}~\citep{alpaca}: we use the standard cleaned variant; $1{,}000$ random examples are used for validation, leaving $\sim$50{,}000 training samples with predominantly long-form answers.
(iv) \textbf{NuminaProof}: starting from \textsc{NuminaMath-1.5}~\citep{numina_math_datasets}, we filter for theorem–proof style items (full solutions are proofs), remove instances with hyperlinks, and sanitize the remaining solutions. We reserve $1{,}000$ examples for validation, yielding $\sim$50{,}000 long-form training samples.

\paragraph{Prompting and formatting.}
All experiments use a \textsc{DeepSeek-R1}–style instruction format~\citep{guo2025deepseek} with the instruction as the system prompt and the question as the user message, rendered with each model family’s standard instruct template (Llama or Qwen). We prefill the assistant turn with \texttt{"<think>"} to initiate the reasoning trace. The final sentence of the system prompt—encouraging concise, easily parsable answers—is enabled only in verifiable settings (see \autoref{tem:system}).

At each training step, each process receives a question prompt, and generates $G$ completions with a maximum length of $T$ tokens to that question. Unless noted otherwise, we use $G=32$ in verifiable domains, and $G=4$ in nonverifiable domains, and $T=1024$. We generate completions until they reach the pattern \texttt{</answer}, but for the likelihood-based rewards, we truncate the CoT at \texttt{<answer}. This is inspired by~\citet{zhou2025verifree} who pointed out that in both the Llama and Qwen tokenizers, there is no individual token that contains the pattern \texttt{r>}, and thus it is guaranteed to be a consistent token boundary. 

For Base RL, 
the verifier tries to parse \verb|<answer>answer</answer>| and match it with the ground truth. If the answer is correct, the reward is 100. If the answer is incorrect but the format is kept correctly (parsing was succesful), the reward is 10. If the format is incorrect and an answer cannot be parsed, the reward is 0.
We train and evaluate with {\em exact} match on the answer.

\highlightbox{
\begin{template}[System prompt]
\label{tem:system}
\texttt{A conversation between User and Assistant. The user asks a question, and the Assistant solves it. The assistant first thinks about the reasoning process in the mind and then provides the user with the answer. The reasoning process and answer are enclosed within <think></think> and <answer></answer> tags, respectively, i.e., <think>reasoning process here</think> <answer>answer here</answer>. \textcolor{red}{Inside the answer tag, put only the answer and no additional commentary.} } 
\end{template}
}

\section{Additional Experimental Results}\label{app:experiments}

\subsection{Verifiable Domains}

Here, we complement \Cref{fig:llama_math_g32}   with the corresponding learning curves for other model-dataset combinations (\Cref{fig-app:qwen_math_g32,fig-app:llama_ds_g32,fig-app:qwen_ds_g32}) and provide the corresponding \Cref{fig-app:llama_math_g4,fig-app:qwen_math_g4,fig-app:llama_ds_g4,fig-app:qwen_ds_g4} and \Cref{tab:verifiable_g4} for training with $G=4$ (including JEPO, which for efficiency reasons we only ran for $G=4$).

\begin{figure}[h]
    \centering
    \includegraphics[width=\linewidth]{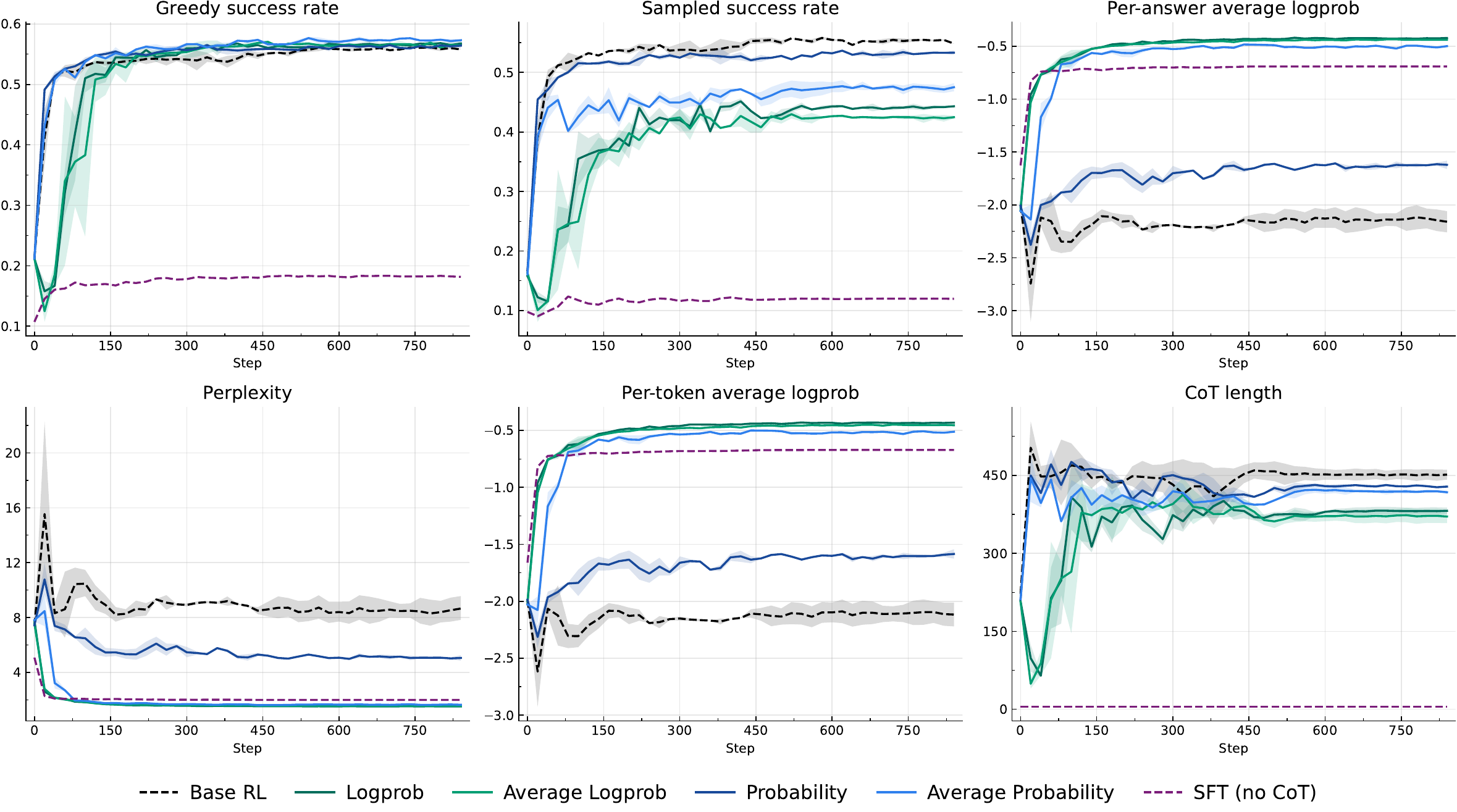}
    \caption{\small{\bf Verifiable. Qwen 2.5 3B Instruct on MATH with a group size of 32.} }
    \label{fig-app:qwen_math_g32}
\end{figure}

\begin{figure}[h]
    \centering
    \includegraphics[width=\linewidth]{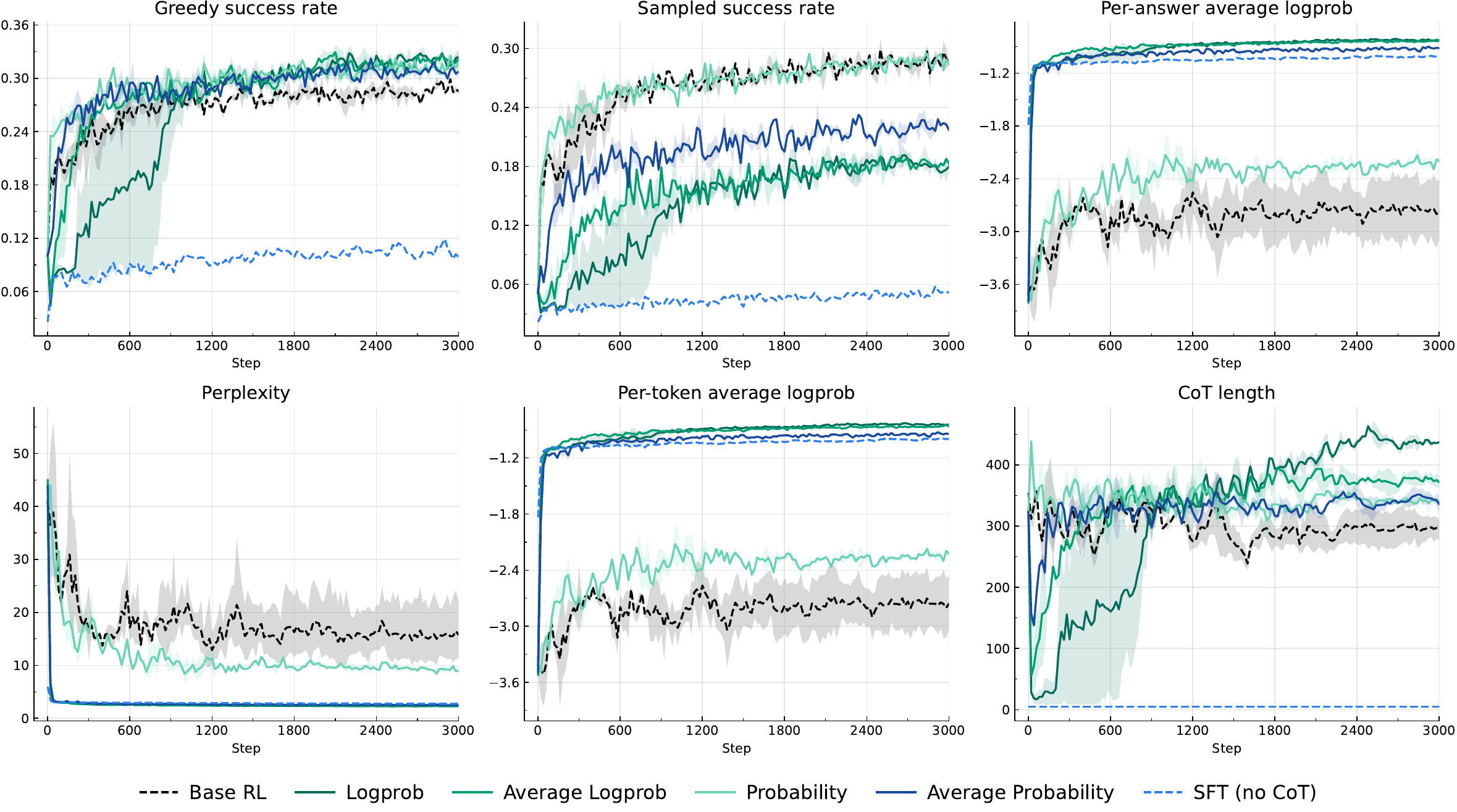}
    \caption{\small{\bf Verifiable. Llama 3.2 3B Instruct on DeepScaleR with a group size of 32.} }
    \label{fig-app:llama_ds_g32}
\end{figure}

\begin{figure}[h]
    \centering
    \includegraphics[width=\linewidth]{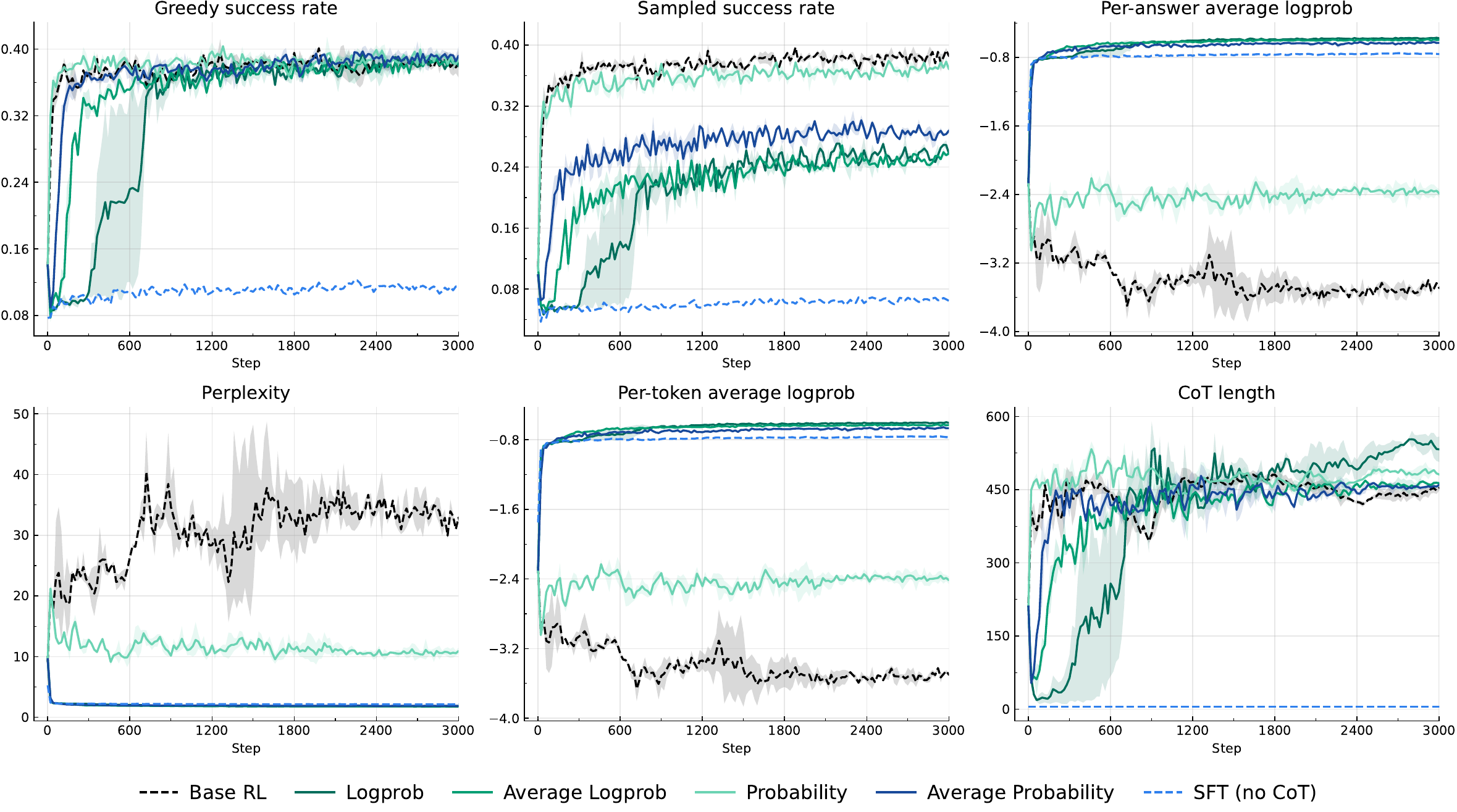}
    \caption{\small{\bf Verifiable. Qwen 2.5 3B Instruct on DeepScaleR with a group size of 32.} }
    \label{fig-app:qwen_ds_g32}
\end{figure}

\input{tables/verifiable_g4}

\begin{figure}[h]
    \centering
    \includegraphics[width=\linewidth]{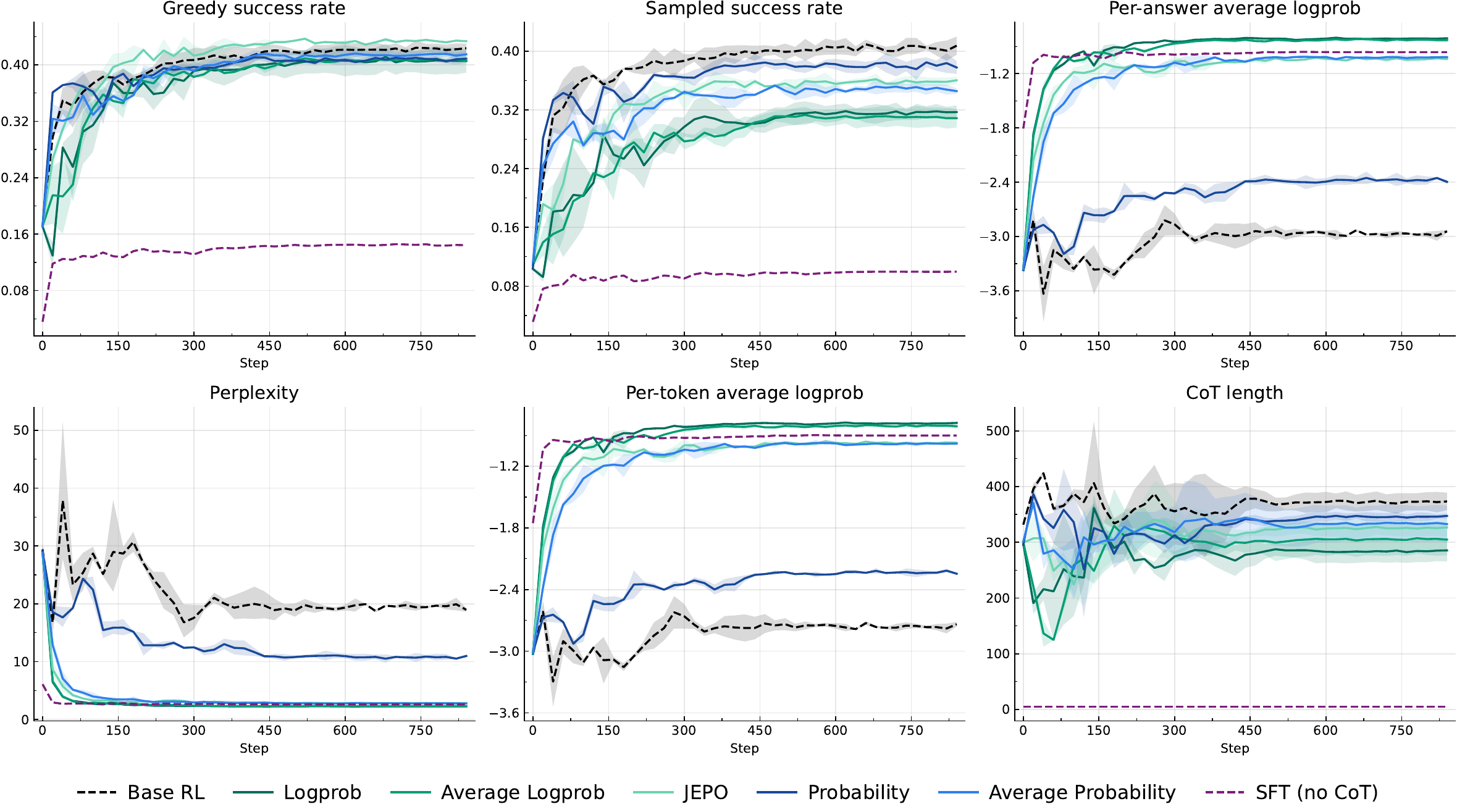}
    \caption{\small {\bf Verifiable: Llama-3.2-3B on MATH with a group size of 4.}  }
    \label{fig-app:llama_math_g4}
\end{figure}

\begin{figure}[h]
    \centering
    \includegraphics[width=\linewidth]{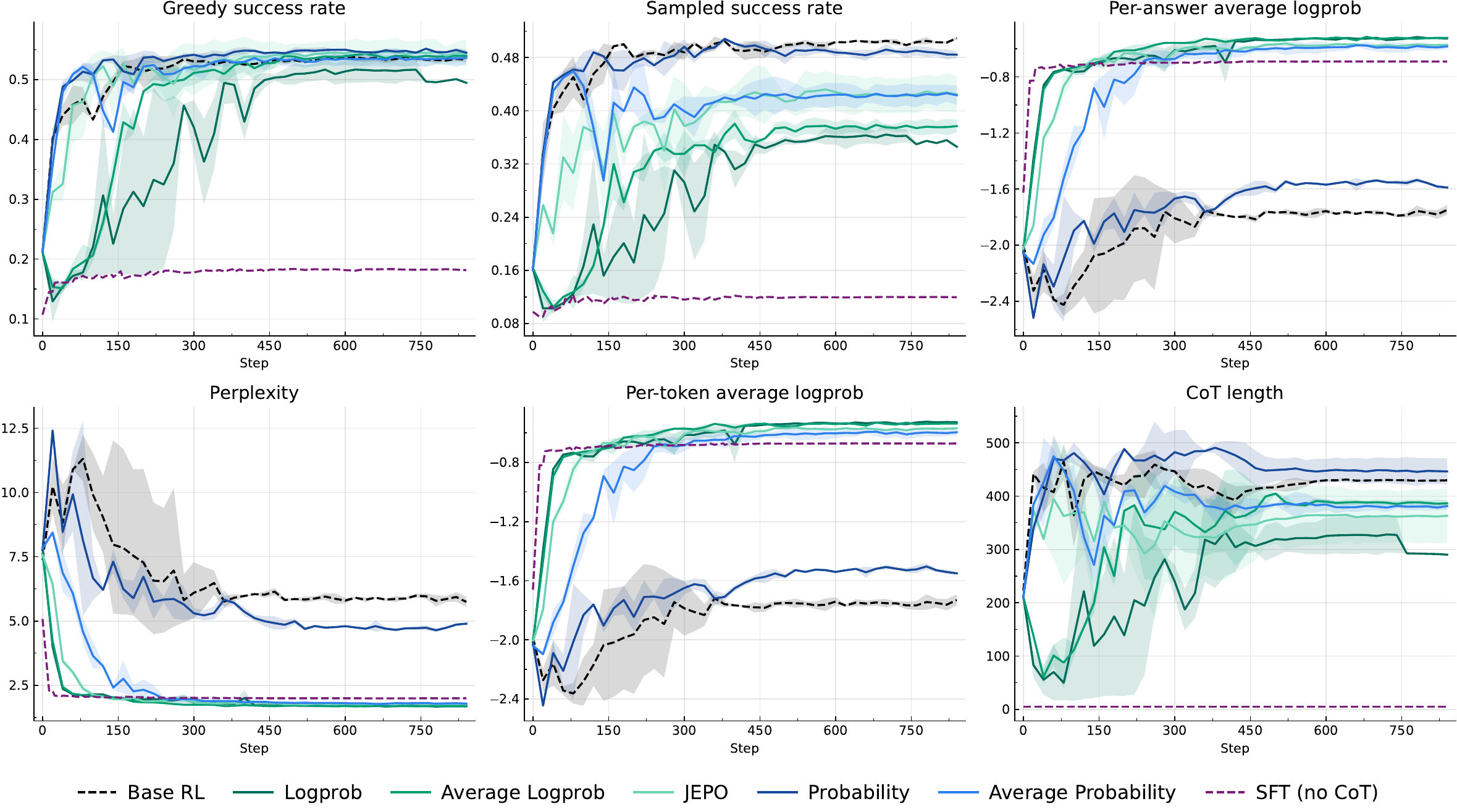}
    \caption{\small{\bf Verifiable. Qwen 2.5 3B Instruct on MATH with a group size of 4.}}
    \label{fig-app:qwen_math_g4}
\end{figure}

\begin{figure}[h]
    \centering
    \includegraphics[width=\linewidth]{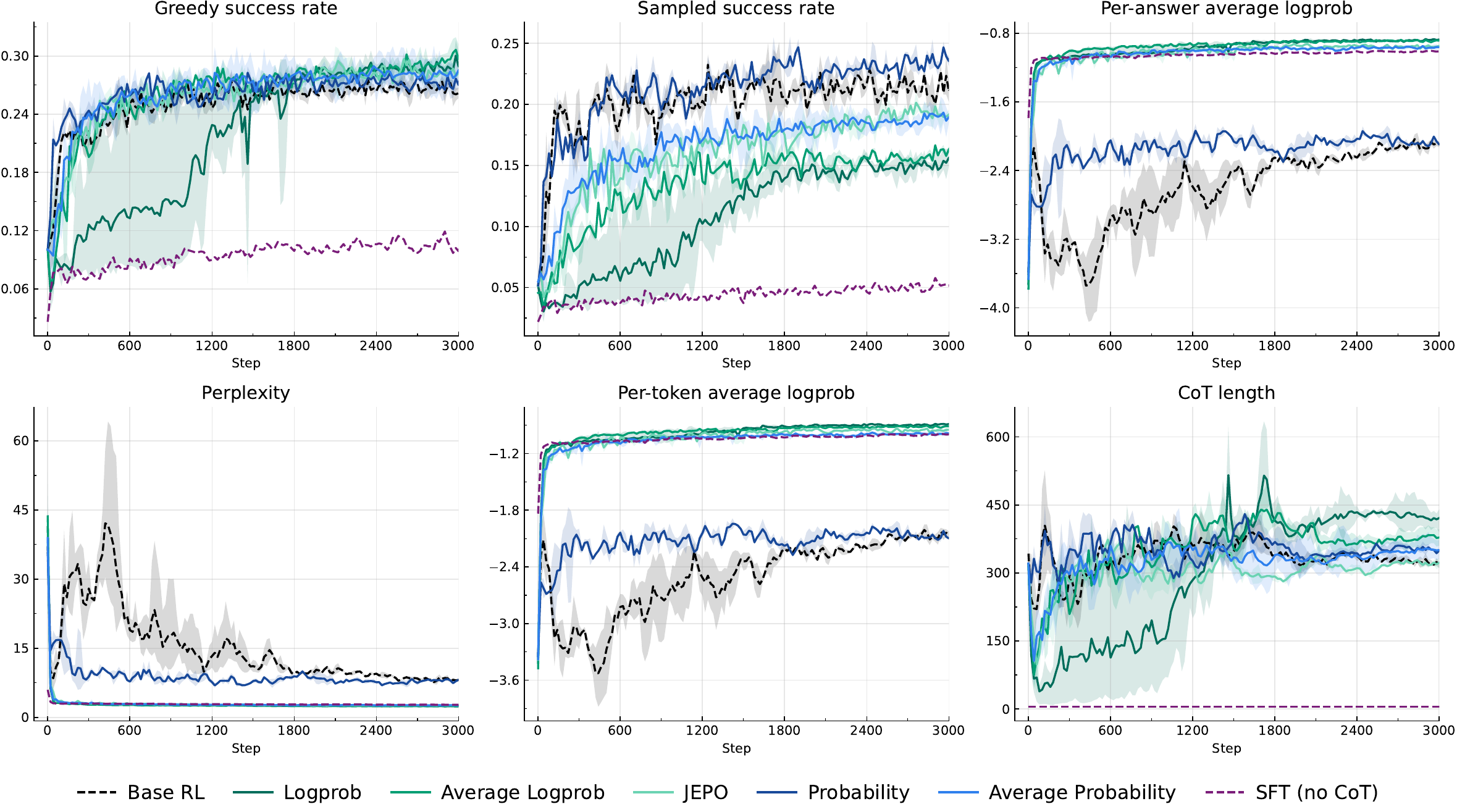}
    \caption{\small{\bf Verifiable. Llama 3.2 3B Instruct on DeepScaleR with a group size of 4.} }
    \label{fig-app:llama_ds_g4}
\end{figure}

\begin{figure}[h]
    \centering
    \includegraphics[width=\linewidth]{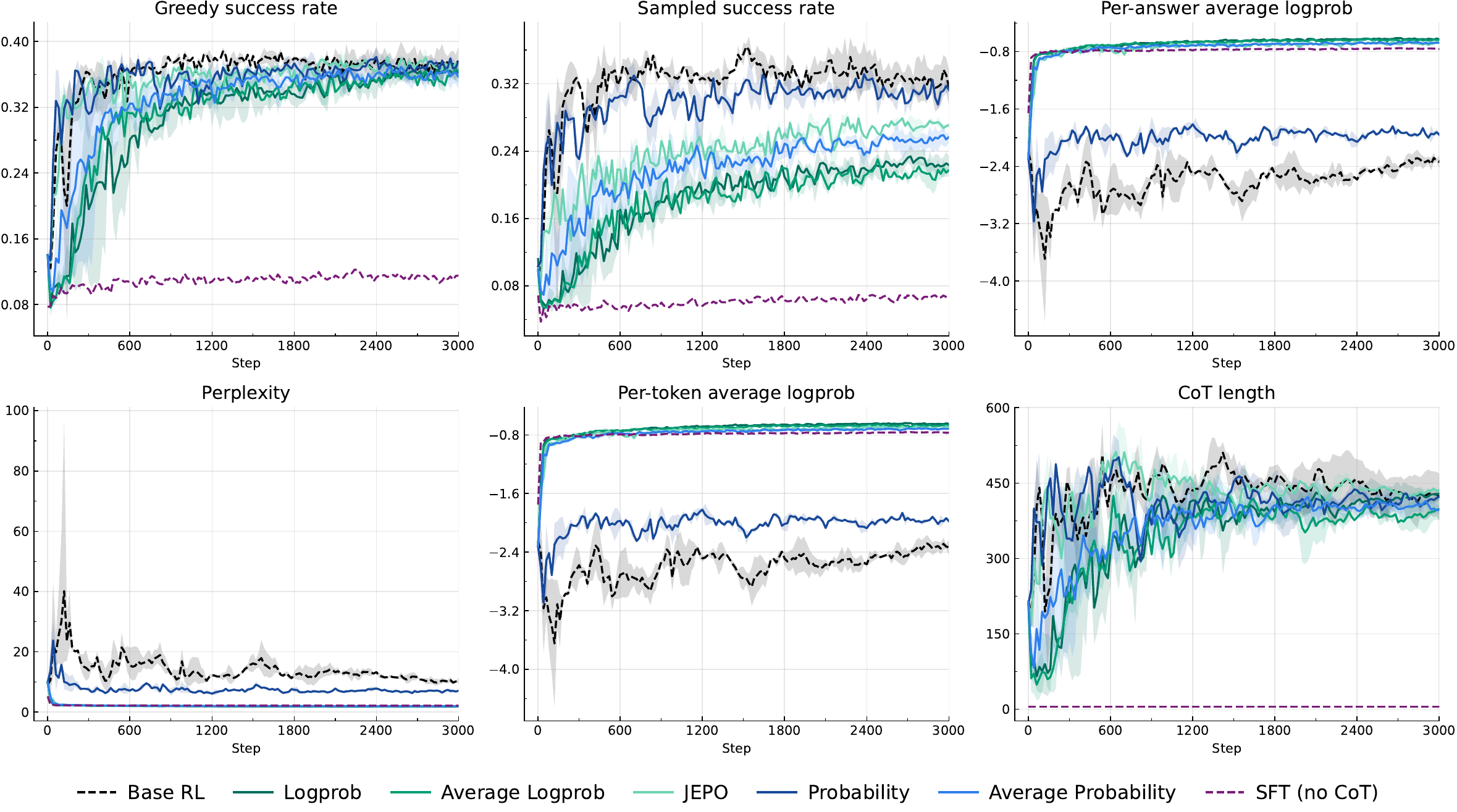}
    \caption{\small{\bf Verifiable. Qwen 2.5 3B Instruct on DeepScaleR with a group size of 4.} }
    \label{fig-app:qwen_ds_g4}
\end{figure}

\newpage 
\clearpage
\subsection{Non-verifiable Domains}

Here we provide the Figures complementary to \Cref{fig:qwen_numina_base} for other model/dataset combinations in \Cref{fig-app:llama_numina_base,fig-app:llama_alpaca_base,fig-app:qwen_alpaca_base}.

\begin{figure}[h]
    \centering
    \includegraphics[width=\linewidth]{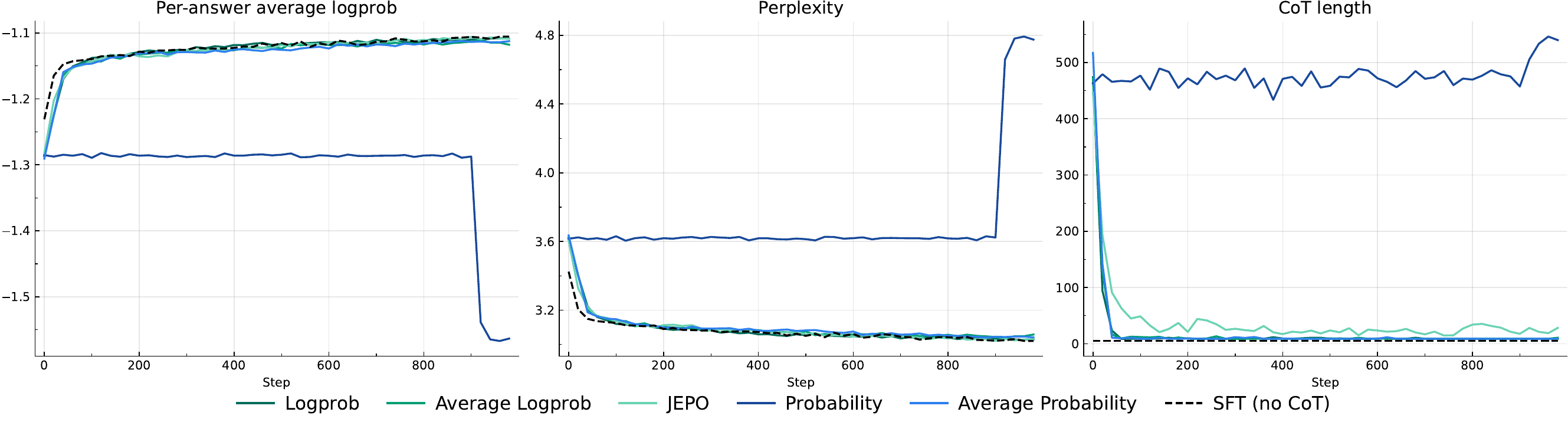}
    \caption{\small{\bf Non-verifiable. Llama 3.2 3B Instruct on NuminaProof.} }
    \label{fig-app:llama_numina_base}
\end{figure}

\begin{figure}[h]
    \centering
    \includegraphics[width=\linewidth]{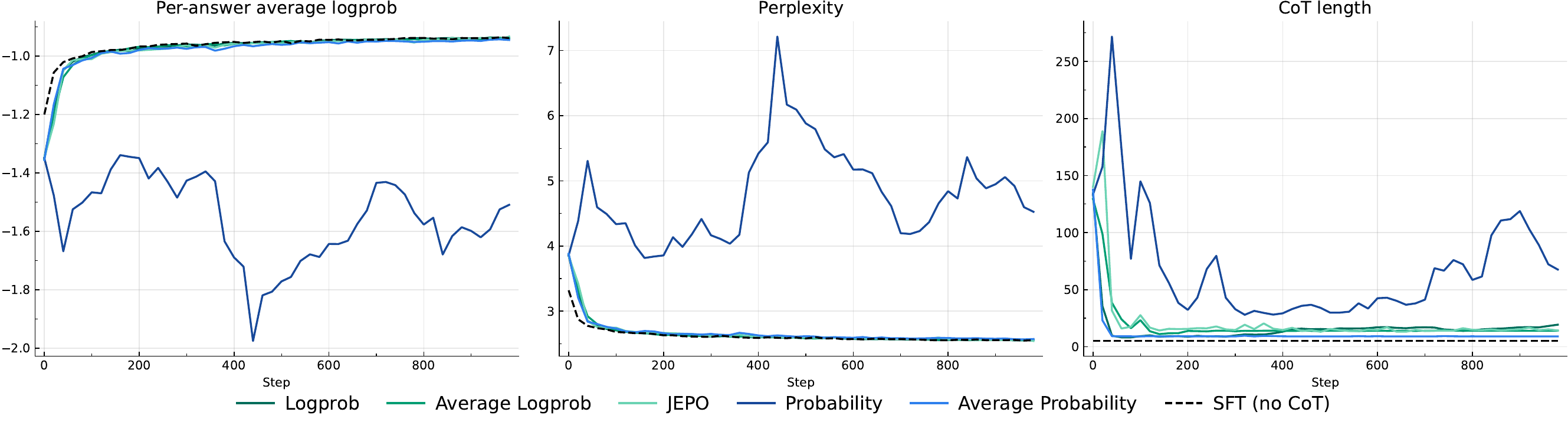}
    \caption{\small{\bf Non-verifiable. Llama 3.2 3B Instruct on Alpaca.} }
    \label{fig-app:llama_alpaca_base}
\end{figure}

\begin{figure}[h]
    \centering
    \includegraphics[width=\linewidth]{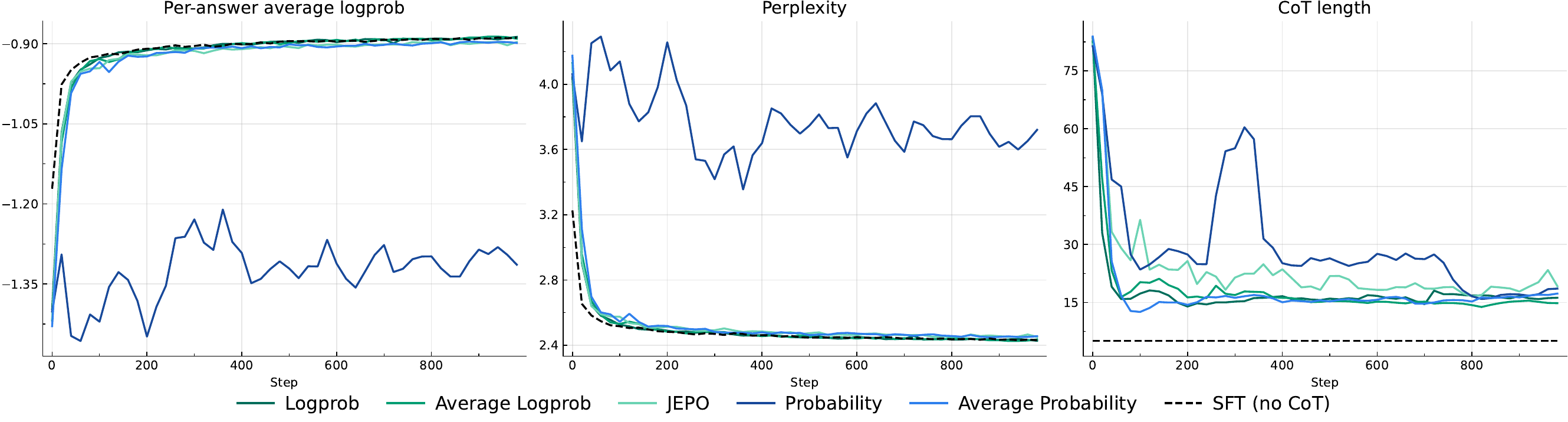}
    \caption{\small{\bf Non-verifiable. Qwen 2.5 3B Instruct on Alpaca.} }
    \label{fig-app:qwen_alpaca_base}
\end{figure}

\newpage
\section{Attempted regularization methods}\label{app:KL_length}

In Section~\ref{sec:length} we use two types of regularization to stabilize the CoT in nonverifiable domains. The first is straight-forward -- we include a KL divergence term in the loss, which keeps the model close to the initial model, as proposed by \citet{guo2025deepseek}.

The second type introduces an additional reward term:
\[
R_l(z) = r \cdot \min\{|z| - l_0, 0\}
\]
that is, for each missing token below a threshold for $l_0$, a negative reward $r$ is applied. We vary the threshold $l_0$ and report results for values of $100$, $150$, $300$ and $500$. To set the value of $r$, we design it so that it approximately compensates the increase of the reward during the initial CoT length drop over the initial 40 training steps. Specifically, we take the base nonverifiable experiments for each algorithm, model and dataset, and set 
\[
r = \frac{\Delta R}{\Delta L}
\]
where $\Delta R$ is the increase in validation reward, and $\Delta L$ is the decrease in the average CoT length.

The results of these ablations are presented in \Cref{fig-app:ablations_l3b_numina,fig-app:ablations_q3b_numina,fig-app:ablations_l3b_alpaca,fig-app:ablations_q3b_alpaca}.

\paragraph{Warm start.} 
\label{app:warmstart}

\begin{figure}[h]
    \centering
    \includegraphics[width=\linewidth]{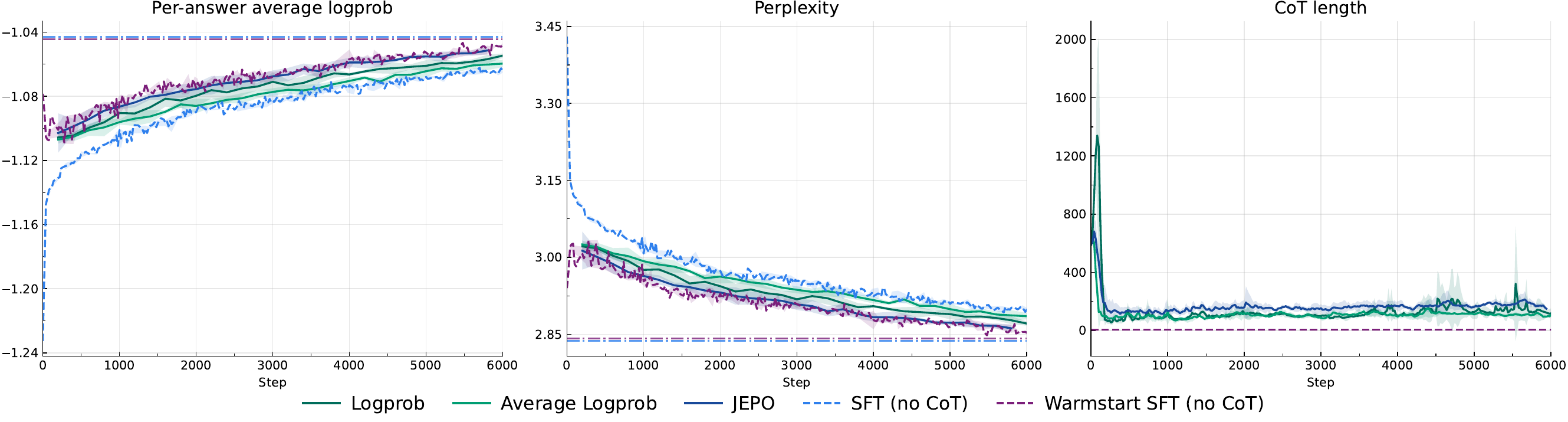}
    \caption{\textbf{Non-verifiable. Llama 3.2 3B Instruct-Warmstart on Numina.}}
    \label{fig:warmstart}
\end{figure}

When training the warm start models, our goal was to improve the initial
performance of the model. We observed that, at initialization, the
perplexity of the correct answer is better if it is appended directly
after the question, and worse if there is an autoregressively generated
CoT between them. This might indicate that the presence of a CoT is, by
itself, affecting performance negatively for the initial model.

Our next step was to produce a version of the base model that would be
``used to'' the presence of a CoT.  To this end, we generated a static
dataset of CoTs generated from the initial model, and trained it with SFT
on (question, completion, answer) triples, masking out everything but the
answer, with the intuition that this would train the model to produce good
answers in the presence of CoTs (but without training the CoT yet).

This ``warmstart'' model is then used as a starting point for the
various CoT training methods.

We display the results in \Cref{fig:warmstart} for Numina. This includes results obtained by training the warm-start model with logprob, average logprob, and JEPO rewards. There are also two SFT variants - one initialized with the typical checkpoint (Llama 3.2 3B-Instruct), and a warmstart variant, which is initialized from the same warmstart checkpoint as the RL models. The two dashed lines indicate the maximum performance that each SFT variant achieves after more training steps, but with equal compute to the RL curves.

We observe that warmstart initialization of the RL algorithms partly
stabilizes the CoT collapse. While the length of the CoT still drops to
significantly lower values, they tend to stabilize around 100-200 tokens,
instead of 5 tokens like in the coldstart case. This vindicates the
intuition behind warm-starting, namely, that initially the presence of a
CoT affects performance negatively.

However, even with warm-start and a stabilized CoT, the actual perplexity
stays close to the SFT baseline, and fails to beat it. It is possible that by adding significantly more compute, we could reproduce the findings of \citet{tang2025beyond} which show RL eventually beating SFT with JEPO.

\begin{figure}
    \centering
    \includegraphics[width=\linewidth]{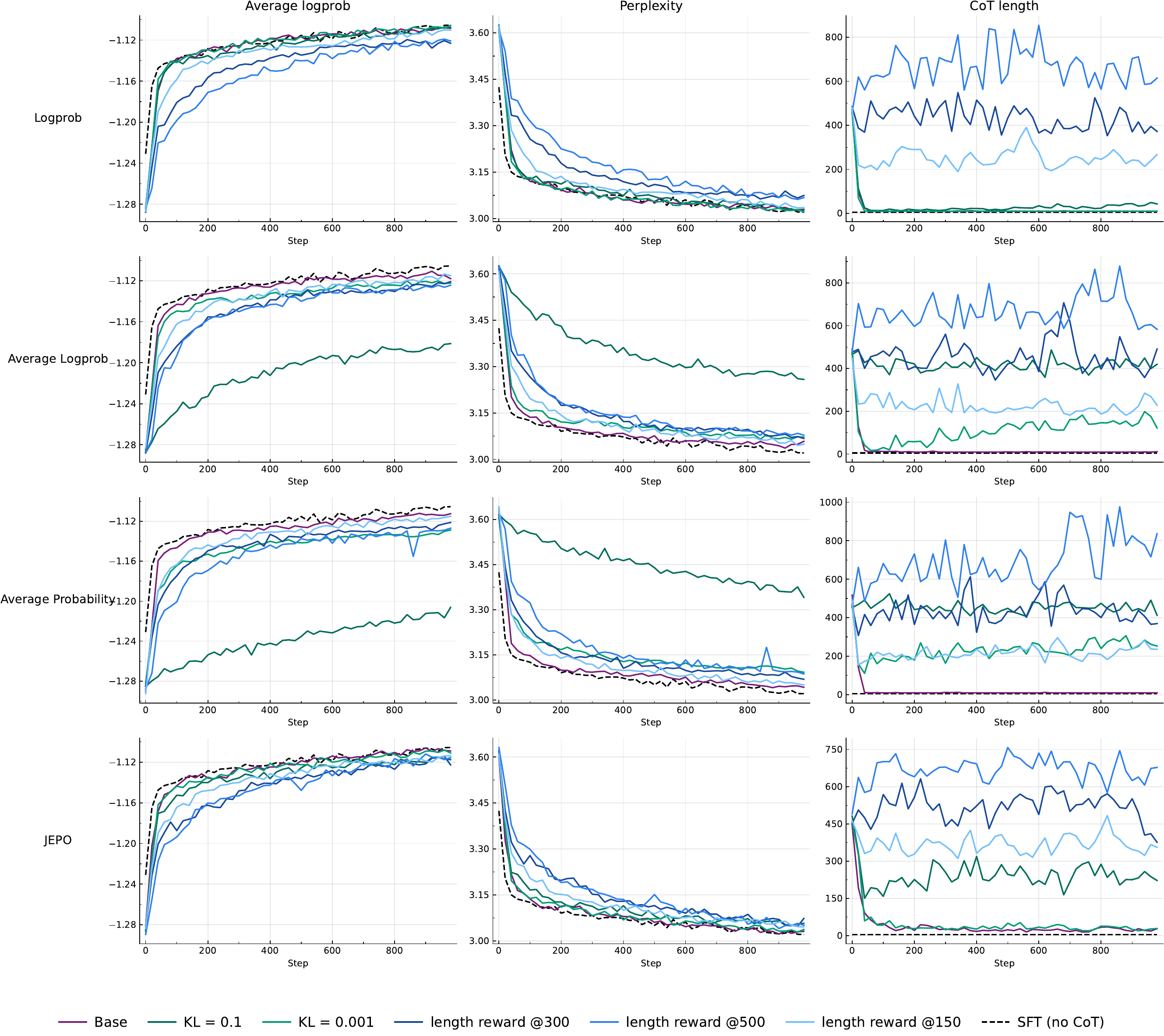}
    \caption{\small{\bf Non-verifiable. Llama 3.2 3B Instruct on NuminaProof.} Training curves of various attempts at stabilizing the CoT on nonverifiable domains with Llama 3.2 3B on NuminaProof. When the KL divergence coefficient, or the length threshold for the penalty are increased, the CoT does better at maintaining a non-trivial length. However, the actual log-prob of the correct answer decreases accordingly.}
    \label{fig-app:ablations_l3b_numina}
\end{figure}

\begin{figure}
    \centering
    \includegraphics[width=\linewidth]{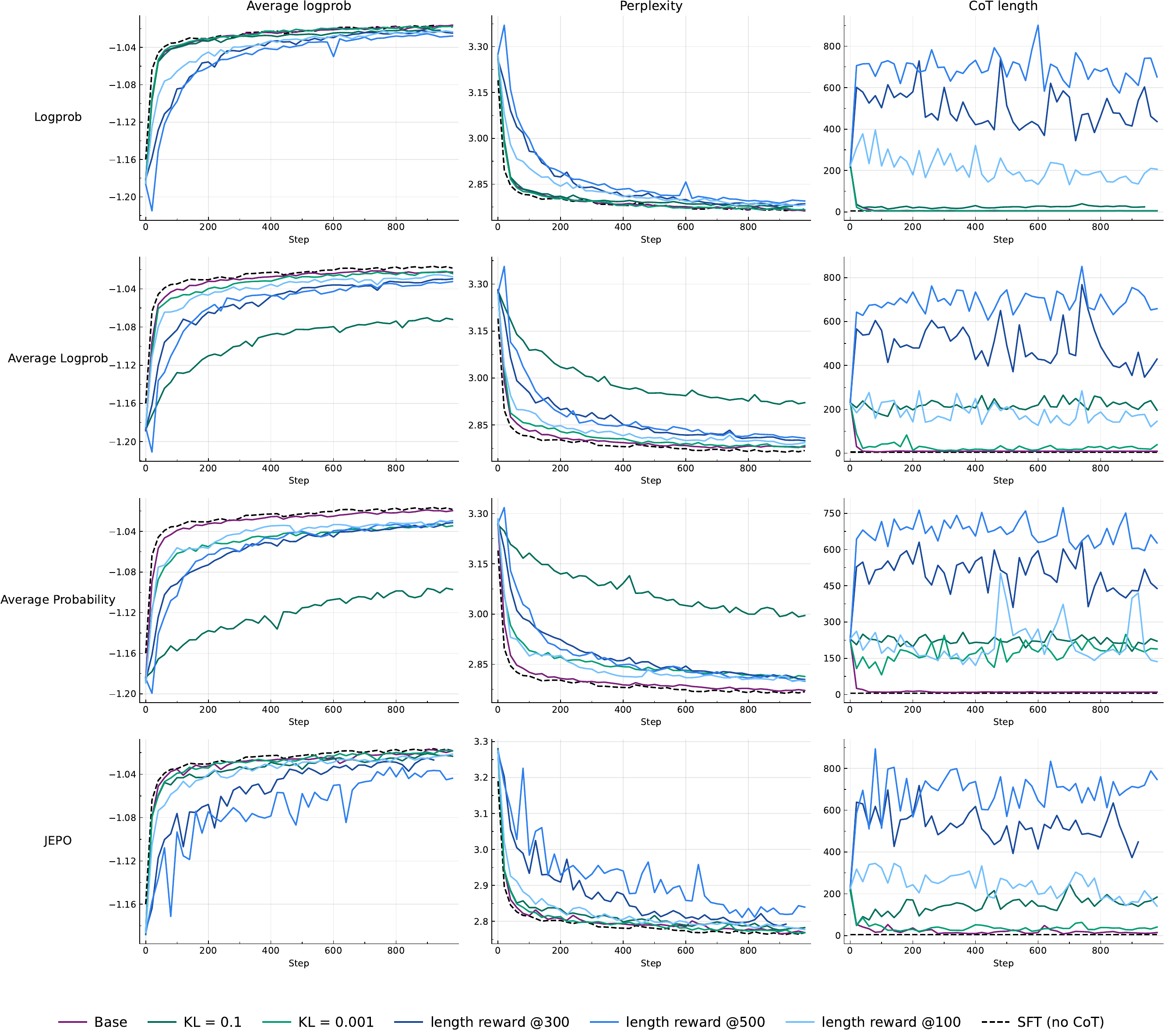}
    \caption{\small{\bf Non-verifiable. Qwen 2.5 3B Instruct on NuminaProof.} Conclusions are similar to \Cref{fig-app:ablations_l3b_numina}.}
    \label{fig-app:ablations_q3b_numina}
\end{figure}

\begin{figure}
    \centering
    \includegraphics[width=\linewidth]{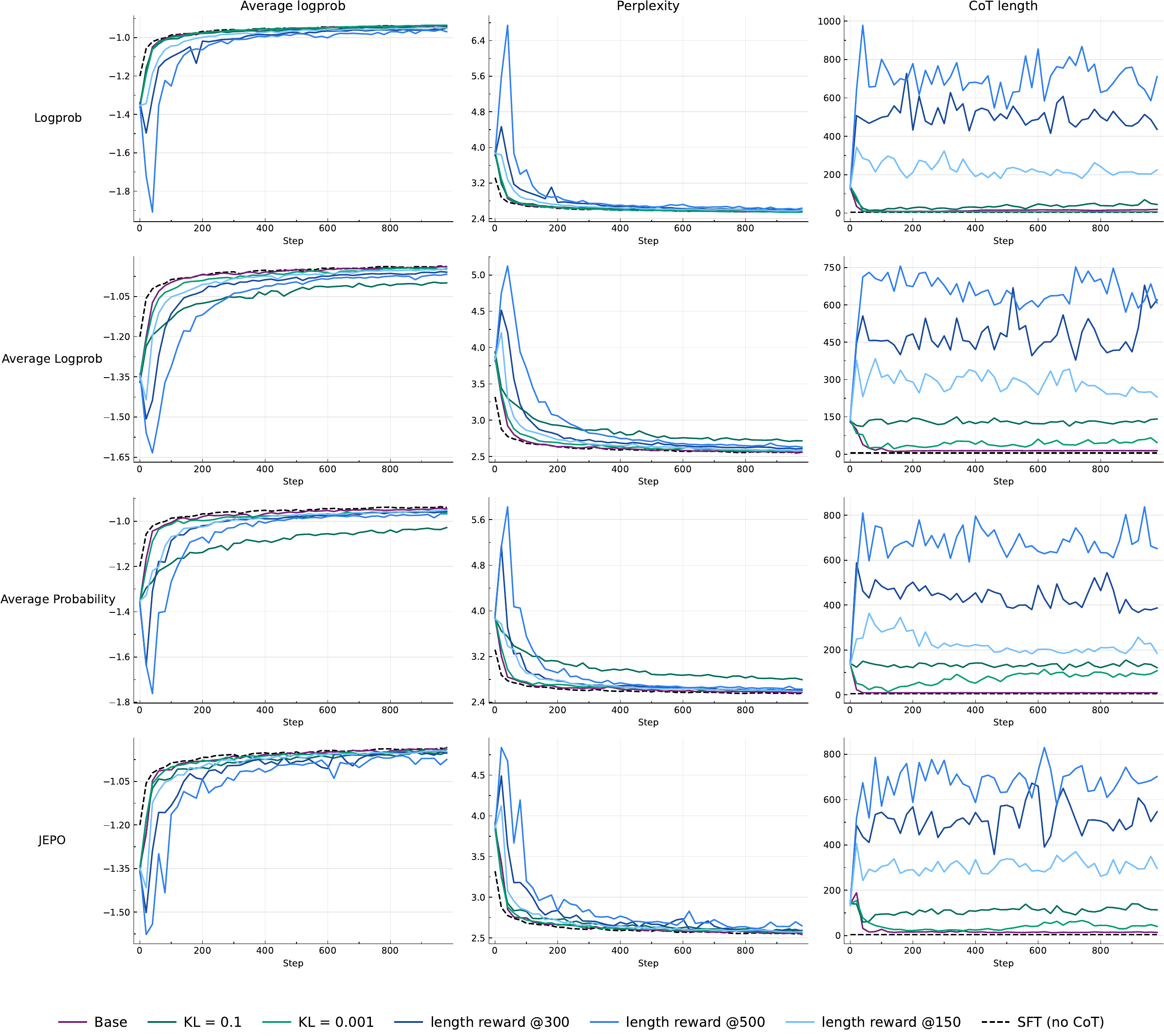}
    \caption{\small{\bf Non-verifiable. Llama 3.2 3B Instruct on Alpaca.} Conclusions are similar to \Cref{fig-app:ablations_l3b_numina}.}
    \label{fig-app:ablations_l3b_alpaca}
\end{figure}

\begin{figure}
    \centering
    \includegraphics[width=\linewidth]{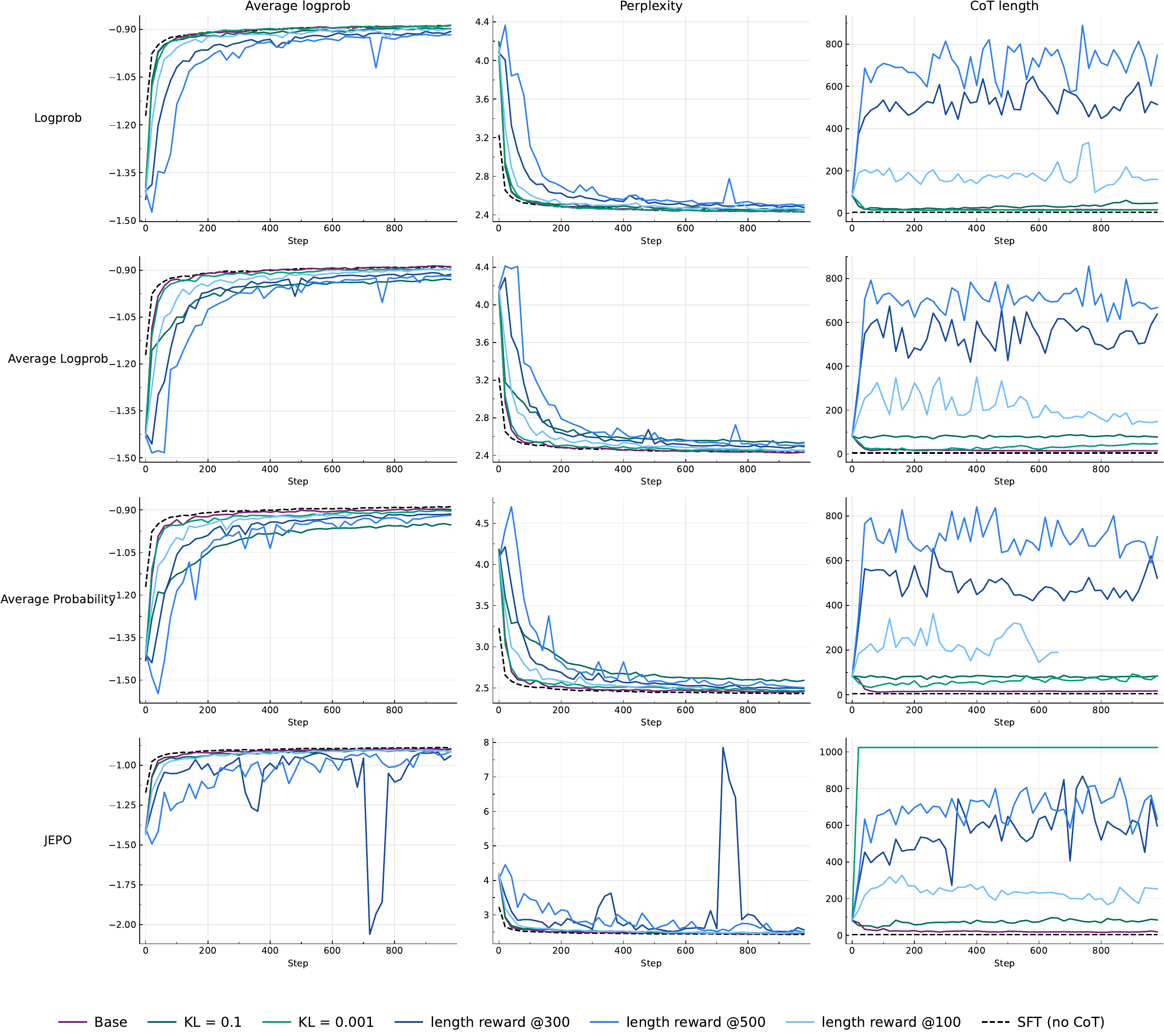}
    \caption{\small{\bf Non-verifiable. Qwen 2.5 3B Instruct on Alpaca.} Conclusions are similar to \Cref{fig-app:ablations_l3b_numina}.}
    \label{fig-app:ablations_q3b_alpaca}
\end{figure}

\newpage
\section{Impact of the Marginal Log-Probability Estimate}

We observe that in some cases, there is a significant difference between
the naive MC1 estimate of the true logprob of the reference answer, and the MC32 estimate. In particular, in verifiable domains where all algorithms reliably learn a nontrivial chain of thought, base RL and probability rewards exhibit a large difference between the two estimates. Due to the high computational cost of frequent MC32 evaluations, we visualize the results in \Cref{fig-app:l3b_ds_mlp,fig-app:q3b_ds_mlp} for a subset of our experimental settings.

\begin{figure}
    \centering
    \includegraphics[width=\linewidth]{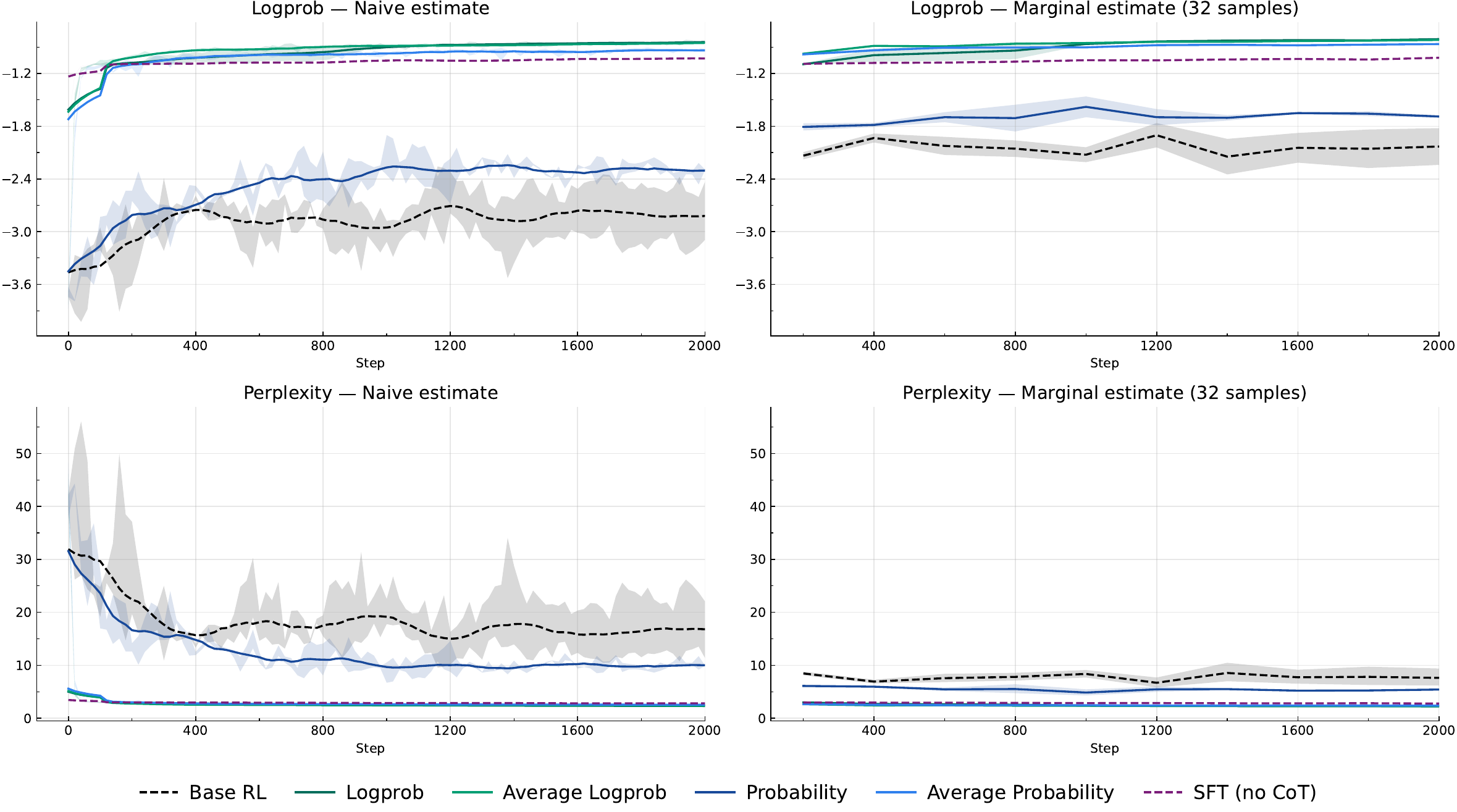}
    \caption{\small{\bf Verifiable. Llama 3.1 3B Instruct on DeepScaleR.} Logprob-style rewards (including average probability) achieve a good performance, beating the SFT baseline, keeping the difference between MC1 and MC32 estimates relatively small. In contrast, baseline RL and probability rewards perform poorly on these metrics.}
    \label{fig-app:l3b_ds_mlp}
\end{figure}

\begin{figure}
    \centering
    \includegraphics[width=\linewidth]{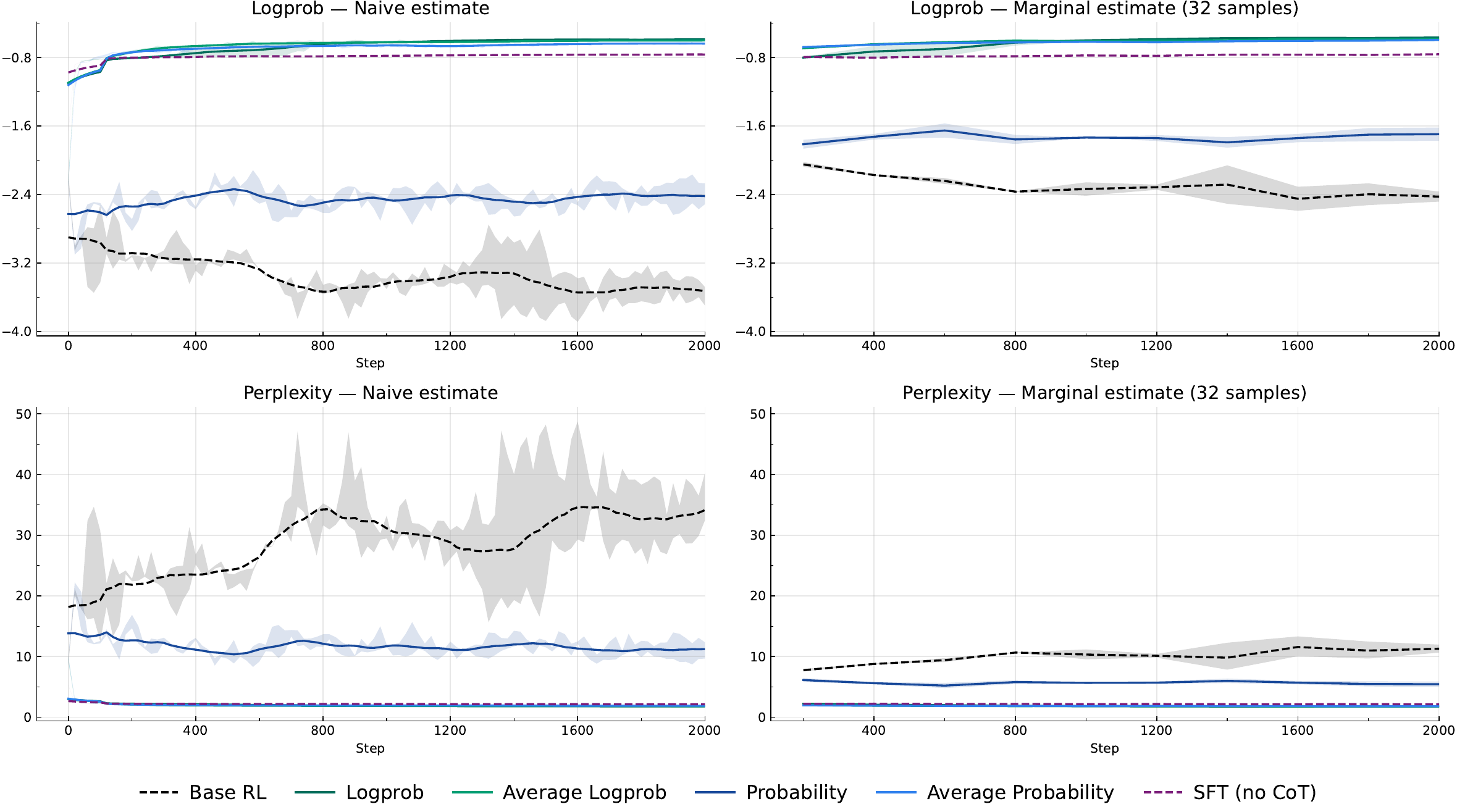}
    \caption{\small{\bf Verifiable. Qwen 2.5 3B Instruct on DeepScaleR.}
    The patterns are largely similar to the Llama model.}
    \label{fig-app:q3b_ds_mlp}
\end{figure}

\section{Correlation Analysis}
\label{sec:correlation}

\begin{figure}
    \centering
    \includegraphics[width=0.5\linewidth]{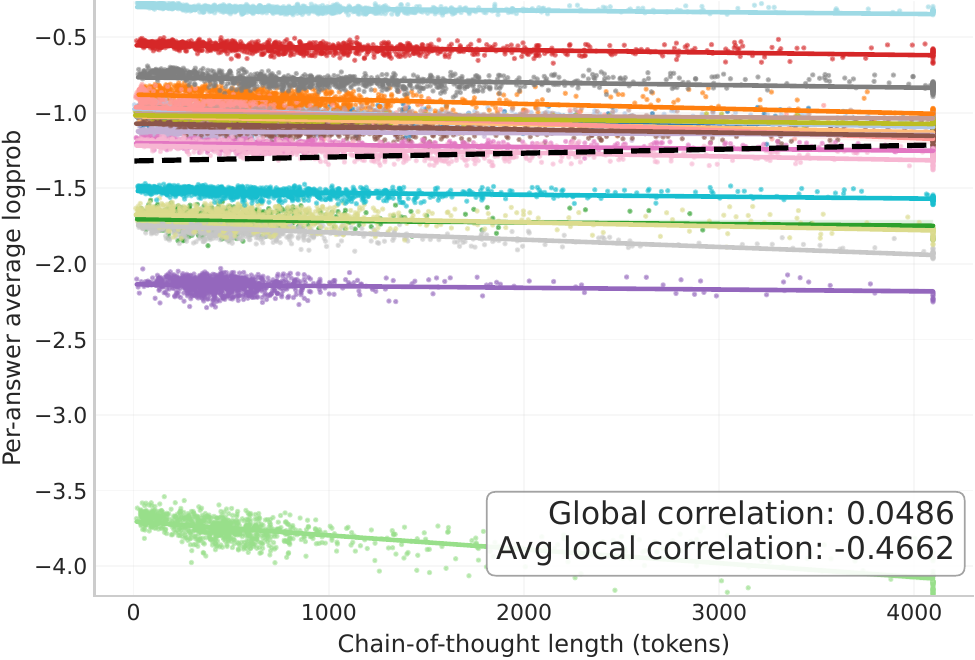}
    \caption{Global and local (per-question) correlations between the logprob of the correct answer and the CoT length, for the Llama 3B Instruct model, on the Numina dataset. Correlations are computed on a random selection of 100 questions from the dataset, each with 1000 CoTs generated with T=1, whereas the graph only includes 20 questions for visual clarity.}
    \label{fig-app:numina_corr_color}
\end{figure}

\begin{figure}
    \centering
    \includegraphics[width=0.5\linewidth]{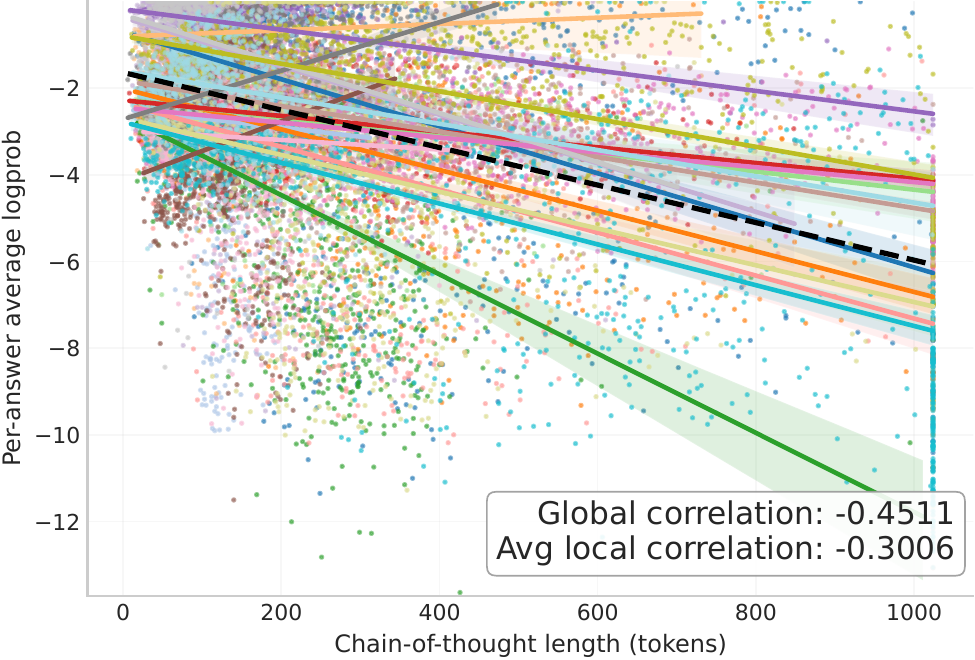}
    \caption{Global and local (per-question) correlations between the logprob of the correct answer and the CoT length, for the Llama 3B Instruct model, on the MATH dataset. Correlations are computed on a random selection of 100 questions from the dataset, each with 1000 CoTs generated with T=1, whereas the graph only includes 20 questions for visual clarity.}
    \label{fig-app:math_corr_color}
\end{figure}

\begin{figure}
    \centering
    \includegraphics[width=0.5\linewidth]{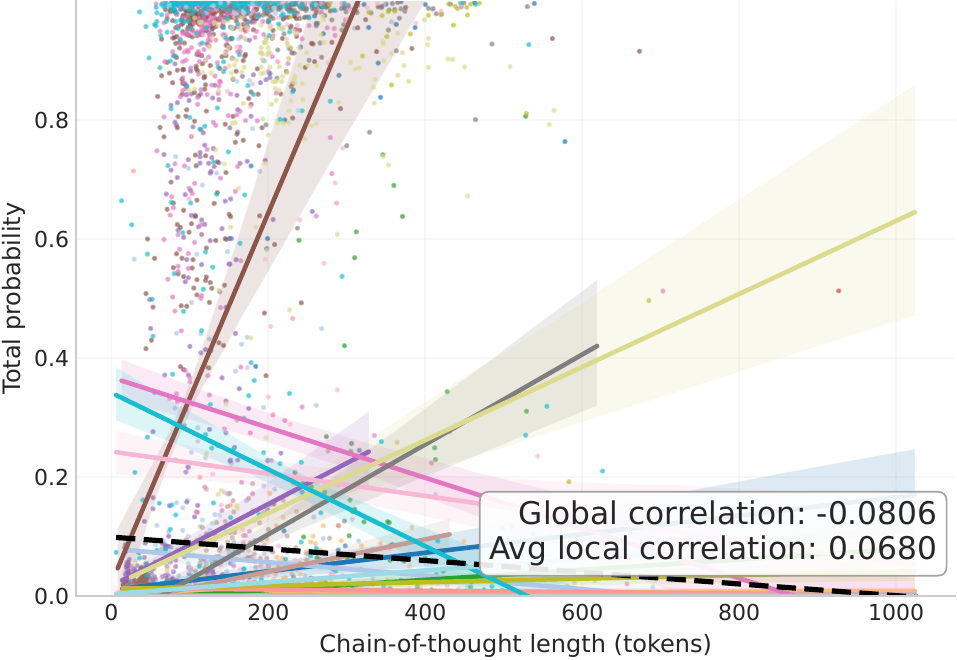}
    \caption{Global and local (per-question) correlations between the \emph{probability} of the correct answer and the CoT length, for the Llama 3B Instruct model, on the MATH dataset. Correlations are computed on a random selection of 100 questions from the dataset, each with 1000 CoTs generated with T=1, whereas the graph only includes 20 questions for visual clarity.}
    \label{fig-app:mathprob_corr_color}
\end{figure}

To investigate the drastic decrease in the CoT length, we measure two
metrics on the initial models' outputs. For 100 random problems from each dataset, we generate 1000 CoTs and measure the correlation between the CoT length, and the probability or logprobability of getting the correct answer after this CoT.

We report two variants of this metric. On the one hand, we have the
global correlation, which simply pools together all CoTs and measures the
Pearson correlation. On the other hand, we compute the ``average
\emph{local} correlation'', that is, we group the completions per-prompt,
compute the correlation for each prompt, and average those values over
prompts.

These two variants can lead to significantly different values,
similarly to Simpson's paradox, as can be seen below.

The key insight here is that when group-relative advantages are computed
using GRPO or RLOO, they are determined relative to other trajectories
for the same problem. So local correlations would be more impactful than
global correlations in driving down the CoT length during GRPO or RLOO.

We observe this in
\Cref{fig-app:numina_corr_color,fig-app:math_corr_color,fig-app:mathprob_corr_color}.
For logprobs on Numina, the global correlation is very low in absolute
terms, and slightly positive. However, the average local correlation is clearly negative, which corresponds to the regime in which the CoT rapidly drops at the beginning of the training. For logprobs on MATH, both the global and local correlations are negative, and indeed we observe a level of CoT degradation in that case. Conversely, when measuring the probability of the correct answer on MATH, the global correlation happens to be slightly negative, but the average local correlation is slightly positive -- indeed, in this case we do not observe a CoT collapse.

%% file: tables/verifiable_g4.tex
\begin{table*}[h]
\centering
\footnotesize
\setlength{\tabcolsep}{5pt}
\renewcommand{\arraystretch}{1.15}
\resizebox{\textwidth}{!}{%
\begin{tabular}{lcccccccc}
\toprule
 & Base model & Base RL & Log-prob & Avg Logprob & JEPO & Probability & Avg Probability & SFT (no CoT) \\
\midrule
\addlinespace\textbf{Llama 3B, MATH} &  &  &  &  &  &  &  &  \\
Greedy success & 17.13 ± 0.00 & 42.09 ± 1.08 & 40.58 ± 2.63 & 40.52 ± 0.01 & 43.19 ± 0.10 & 40.48 ± 0.42 & 41.56 ± 0.52 & 14.38 \\
T=1 Sampled Success & 10.77 ± 0.07 & 40.34 ± 1.30 & 31.81 ± 1.40 & 31.06 ± 1.82 & 35.85 ± 0.80 & 38.03 ± 0.99 & 35.03 ± 0.73 & 9.96 \\
Average log-prob MC32 & — & — & -0.72 ± 0.02 & -0.73 ± 0.02 & -0.79 ± 0.01 & -1.52 ± 0.04 & -0.82 ± 0.00 & -0.96 \\
Average log-prob & -4.21 ± 0.00 & -2.98 ± 0.06 & -0.83 ± 0.01 & -1.02 ± 0.01 & -1.23 ± 0.04 & -2.56 ± 0.06 & -1.11 ± 0.01 & -0.96 \\
Perplexity & 67.29 ± 0.00 & 19.65 ± 1.17 & 2.30 ± 0.03 & 2.77 ± 0.04 & 3.41 ± 0.14 & 12.97 ± 0.81 & 3.03 ± 0.04 & 2.62 \\
CoT length & 331.66 ± 2.02 & 373.52 ± 23.15 & 283.84 ± 27.47 & 305.76 ± 40.62 & 326.15 ± 21.00 & 345.83 ± 10.13 & 333.47 ± 45.09 & 5.00 \\
\addlinespace\textbf{Qwen 3B, MATH} &  &  &  &  &  &  &  &  \\
Greedy success & 21.19 ± 0.00 & 53.19 ± 0.18 & 51.70 ± 2.65 & 53.87 ± 0.24 & 54.01 ± 2.10 & 54.59 ± 0.24 & 53.52 ± 1.47 & 18.32 \\
T=1 Sampled Success & 16.10 ± 0.06 & 50.31 ± 0.03 & 36.36 ± 1.71 & 37.53 ± 1.32 & 42.55 ± 3.69 & 48.75 ± 0.77 & 42.37 ± 2.22 & 12.00 \\
Average log-prob MC32 & — & — & -0.47 ± 0.00 & -0.46 ± 0.01 & -0.47 ± 0.02 & -1.00 ± 0.03 & -0.49 ± 0.02 & — \\
Average log-prob & -2.19 ± 0.00 & -1.89 ± 0.01 & -0.67 ± 0.09 & -0.68 ± 0.01 & -0.78 ± 0.02 & -1.77 ± 0.02 & -0.79 ± 0.11 & -0.69 \\
Perplexity & 8.92 ± 0.00 & 6.60 ± 0.03 & 1.97 ± 0.19 & 1.98 ± 0.02 & 2.18 ± 0.05 & 5.85 ± 0.13 & 2.22 ± 0.24 & 1.99 \\
CoT length & 222.43 ± 0.80 & 429.31 ± 0.17 & 326.69 ± 49.32 & 387.34 ± 7.17 & 362.67 ± 70.73 & 447.37 ± 34.30 & 380.90 ± 10.57 & 5.00 \\
\addlinespace\textbf{Llama 3B, DeepScaleR} &  &  &  &  &  &  &  &  \\
Greedy success & 10.05 ± 0.00 & 26.18 ± 0.74 & 28.93 ± 1.55 & 29.67 ± 2.14 & 28.23 ± 0.11 & 27.01 ± 0.55 & 28.54 ± 0.89 & 9.88 \\
T=1 Sampled Success & 5.25 ± 0.92 & 21.05 ± 1.21 & 15.64 ± 0.73 & 16.33 ± 0.34 & 19.16 ± 0.59 & 23.54 ± 0.58 & 18.97 ± 0.32 & 5.14 \\
Average log-prob MC32 & — & -1.58 ± 0.03 & -0.84 ± 0.00 & -0.85 ± 0.00 & -0.85 ± 0.01 & -1.59 ± 0.02 & -0.89 ± 0.01 & -1.01 \\
Average log-prob & -4.92 ± 0.00 & -2.74 ± 0.04 & -1.02 ± 0.04 & -1.19 ± 0.15 & -1.73 ± 0.28 & -2.99 ± 0.07 & -1.97 ± 0.51 & -1.01 \\
Perplexity & 137.25 ± 0.00 & 15.49 ± 0.63 & 2.77 ± 0.10 & 3.30 ± 0.50 & 5.77 ± 1.60 & 19.90 ± 1.47 & 7.97 ± 4.41 & 2.74 \\
CoT length & 342.53 ± 1.31 & 317.31 ± 8.96 & 421.21 ± 27.22 & 378.21 ± 25.53 & 322.58 ± 13.60 & 349.44 ± 5.37 & 350.73 ± 20.70 & 5.00 \\
\addlinespace\textbf{Qwen 3B, DeepScaleR} &  &  &  &  &  &  &  &  \\
Greedy success & 14.02 ± 0.00 & 37.88 ± 2.16 & 36.21 ± 1.09 & 36.54 ± 0.72 & 37.04 ± 0.12 & 37.60 ± 0.00 & 35.67 ± 1.14 & 11.52 \\
T=1 Sampled Success & 10.05 ± 0.64 & 31.57 ± 2.99 & 22.34 ± 0.96 & 21.74 ± 0.95 & 27.12 ± 0.03 & 31.30 ± 0.32 & 25.69 ± 0.61 & 6.48 \\
Average log-prob MC32 & — & -1.61 ± 0.13 & -0.59 ± 0.01 & -0.61 ± 0.00 & -0.61 ± 0.00 & -1.38 ± 0.00 & -0.63 ± 0.00 & -0.76 \\
Average log-prob & -2.57 ± 0.00 & -2.63 ± 0.05 & -0.69 ± 0.02 & -0.88 ± 0.09 & -0.96 ± 0.00 & -2.49 ± 0.07 & -1.09 ± 0.07 & -0.76 \\
Perplexity & 13.12 ± 0.00 & 13.87 ± 0.75 & 2.00 ± 0.05 & 2.43 ± 0.22 & 2.62 ± 0.01 & 12.10 ± 0.84 & 2.99 ± 0.21 & 2.14 \\
CoT length & 216.25 ± 0.23 & 422.15 ± 65.98 & 429.51 ± 12.13 & 398.42 ± 30.20 & 436.50 ± 9.25 & 423.48 ± 5.80 & 396.55 ± 7.76 & 5.00 \\
\bottomrule
\end{tabular}
}
\caption{\small {\bf Results on verifiable domains, G=4.} Final performance of models trained with a group size of 4, across all our algorithms (including JEPO) and metrics. Conclusions mirror those of \Cref{tab:verifiable_g32}.  The corresponding learning curves are presented in \Cref{fig-app:llama_math_g4,fig-app:qwen_math_g4,fig-app:llama_ds_g4,fig-app:qwen_ds_g4}.}
\label{tab:verifiable_g4}
\end{table*}

%% file: neurips_2025.bib
@article{openai2023gpt4,
  title={GPT-4 Technical Report},
  author={OpenAI},
  journal={arXiv preprint arXiv:2303.08774},
  year={2023}
}

@article{wei2022chain,
  title={Chain-of-thought prompting elicits reasoning in large language models},
  author={Wei, Jason and Wang, Xuezhi and Schuurmans, Dale and Bosma, Maarten and Xia, Fei and Chi, Ed and Le, Quoc V and Zhou, Denny and others},
  journal={Advances in neural information processing systems},
  volume={35},
  pages={24824--24837},
  year={2022}
}

@article{schulman2017,
  title={Proximal policy optimization algorithms},
  author={Schulman, John and Wolski, Filip and Dhariwal, Prafulla and Radford, Alec and Klimov, Oleg},
  journal={arXiv preprint arXiv:1707.06347},
  year={2017}
}

@article{kingma2014adam,
  title={Adam: A method for stochastic optimization},
  author={Kingma, Diederik P and Ba, Jimmy},
  journal={arXiv preprint arXiv:1412.6980},
  year={2014}
}

@article{ahmadian2024back,
  title={Back to basics: Revisiting reinforce style optimization for learning from human feedback in llms},
  author={Ahmadian, Arash and Cremer, Chris and Gall{\'e}, Matthias and Fadaee, Marzieh and Kreutzer, Julia and Pietquin, Olivier and {\"U}st{\"u}n, Ahmet and Hooker, Sara},
  journal={arXiv preprint arXiv:2402.14740},
  year={2024}
}

@article{cobbe2021training,
  title={Training verifiers to solve math word problems},
  author={Cobbe, Karl and Kosaraju, Vineet and Bavarian, Mohammad and Chen, Mark and Jun, Heewoo and Kaiser, Lukasz and Plappert, Matthias and Tworek, Jerry and Hilton, Jacob and Nakano, Reiichiro and others},
  journal={arXiv preprint arXiv:2110.14168},
  year={2021}
}

@inproceedings{rloo,
    title = "Back to Basics: Revisiting {REINFORCE}-Style Optimization for Learning from Human Feedback in {LLM}s",
    author = {Ahmadian, Arash  and
      Cremer, Chris  and
      Gall{\'e}, Matthias  and
      Fadaee, Marzieh  and
      Kreutzer, Julia  and
      Pietquin, Olivier  and
      {\"U}st{\"u}n, Ahmet  and
      Hooker, Sara},
    editor = "Ku, Lun-Wei  and
      Martins, Andre  and
      Srikumar, Vivek",
    booktitle = "Proceedings of the 62nd Annual Meeting of the Association for Computational Linguistics (Volume 1: Long Papers)",
    month = aug,
    year = "2024",
    address = "Bangkok, Thailand",
    publisher = "Association for Computational Linguistics",
    url = "https://aclanthology.org/2024.acl-long.662/",
    doi = "10.18653/v1/2024.acl-long.662",
    pages = "12248--12267"
}

@inproceedings{
agarwal2025the,
title={The Unreasonable Effectiveness of Entropy Minimization in {LLM} Reasoning},
author={Shivam Agarwal and Zimin Zhang and Lifan Yuan and Jiawei Han and Hao Peng},
booktitle={The Thirty-ninth Annual Conference on Neural Information Processing Systems},
year={2025},
url={https://openreview.net/forum?id=UfFTBEsLgI}
}

@inproceedings{HendrycksBKABTS21,
  author={Dan Hendrycks and Collin Burns and Saurav Kadavath and Akul Arora and Steven Basart and Eric Tang and Dawn Song and Jacob Steinhardt},
  title={Measuring Mathematical Problem Solving With the MATH Dataset},
  year={2021},
  cdate={1609459200000},
  url={https://datasets-benchmarks-proceedings.neurips.cc/paper/2021/hash/be83ab3ecd0db773eb2dc1b0a17836a1-Abstract-round2.html},
  booktitle={NeurIPS Datasets and Benchmarks},
}

@inproceedings{
tang2025beyond,
title={Beyond Verifiable Rewards: Scaling Reinforcement Learning in Language Models to Unverifiable Data},
author={Yunhao Tang and Sid Wang and Lovish Madaan and Remi Munos},
booktitle={The Thirty-ninth Annual Conference on Neural Information Processing Systems},
year={2025},
url={https://openreview.net/forum?id=pc6M9h3T9m}
}


%% file: refs.bib
@misc{alpaca,
  author = {Rohan Taori and Ishaan Gulrajani and Tianyi Zhang and Yann Dubois and Xuechen Li and Carlos Guestrin and Percy Liang and Tatsunori B. Hashimoto },
  title = {Stanford Alpaca: An Instruction-following LLaMA model},
  year = {2023},
  publisher = {GitHub},
  journal = {GitHub repository},
  howpublished = {\url{https://github.com/tatsu-lab/stanford_alpaca}},
}

@misc{zhu2025surveylatentreasoning,
      title={A Survey on Latent Reasoning}, 
      author={Rui-Jie Zhu and Tianhao Peng and Tianhao Cheng and Xingwei Qu and Jinfa Huang and Dawei Zhu and Hao Wang and Kaiwen Xue and Xuanliang Zhang and Yong Shan and Tianle Cai and Taylor Kergan and Assel Kembay and Andrew Smith and Chenghua Lin and Binh Nguyen and Yuqi Pan and Yuhong Chou and Zefan Cai and Zhenhe Wu and Yongchi Zhao and Tianyu Liu and Jian Yang and Wangchunshu Zhou and Chujie Zheng and Chongxuan Li and Yuyin Zhou and Zhoujun Li and Zhaoxiang Zhang and Jiaheng Liu and Ge Zhang and Wenhao Huang and Jason Eshraghian},
      year={2025},
      eprint={2507.06203},
      archivePrefix={arXiv},
      primaryClass={cs.CL},
      url={https://arxiv.org/abs/2507.06203}, 
}

@misc{jayalath2025computeteacherturninginference,
      title={Compute as Teacher: Turning Inference Compute Into Reference-Free Supervision}, 
      author={Dulhan Jayalath and Shashwat Goel and Thomas Foster and Parag Jain and Suchin Gururangan and Cheng Zhang and Anirudh Goyal and Alan Schelten},
      year={2025},
      eprint={2509.14234},
      archivePrefix={arXiv},
      primaryClass={cs.LG},
      url={https://arxiv.org/abs/2509.14234}, 
}

@misc{simonds2025rlsrreinforcementlearningself,
      title={RLSR: Reinforcement Learning from Self Reward}, 
      author={Toby Simonds and Kevin Lopez and Akira Yoshiyama and Dominique Garmier},
      year={2025},
      eprint={2505.08827},
      archivePrefix={arXiv},
      primaryClass={cs.LG},
      url={https://arxiv.org/abs/2505.08827}, 
}

@inproceedings{lee2024rlaif,
  title={RLAIF vs. RLHF: Scaling Reinforcement Learning from Human Feedback with AI Feedback},
  author={Lee, Harrison and Phatale, Samrat and Mansoor, Hassan and Mesnard, Thomas and Ferret, Johan and Lu, Kellie Ren and Bishop, Colton and Hall, Ethan and Carbune, Victor and Rastogi, Abhinav and others},
  booktitle={International Conference on Machine Learning},
  pages={26874--26901},
  year={2024},
  organization={PMLR}
}

@article{whitehouse2025j1,
  title={J1: Incentivizing thinking in llm-as-a-judge via reinforcement learning},
  author={Whitehouse, Chenxi and Wang, Tianlu and Yu, Ping and Li, Xian and Weston, Jason and Kulikov, Ilia and Saha, Swarnadeep},
  journal={arXiv preprint arXiv:2505.10320},
  year={2025}
}

@misc{bai2022constitutionalaiharmlessnessai,
      title={Constitutional AI: Harmlessness from AI Feedback}, 
      author={Yuntao Bai and Saurav Kadavath and Sandipan Kundu and Amanda Askell and Jackson Kernion and Andy Jones and Anna Chen and Anna Goldie and Azalia Mirhoseini and Cameron McKinnon and Carol Chen and Catherine Olsson and Christopher Olah and Danny Hernandez and Dawn Drain and Deep Ganguli and Dustin Li and Eli Tran-Johnson and Ethan Perez and Jamie Kerr and Jared Mueller and Jeffrey Ladish and Joshua Landau and Kamal Ndousse and Kamile Lukosuite and Liane Lovitt and Michael Sellitto and Nelson Elhage and Nicholas Schiefer and Noemi Mercado and Nova DasSarma and Robert Lasenby and Robin Larson and Sam Ringer and Scott Johnston and Shauna Kravec and Sheer El Showk and Stanislav Fort and Tamera Lanham and Timothy Telleen-Lawton and Tom Conerly and Tom Henighan and Tristan Hume and Samuel R. Bowman and Zac Hatfield-Dodds and Ben Mann and Dario Amodei and Nicholas Joseph and Sam McCandlish and Tom Brown and Jared Kaplan},
      year={2022},
      eprint={2212.08073},
      archivePrefix={arXiv},
      primaryClass={cs.CL},
      url={https://arxiv.org/abs/2212.08073}, 
}

@misc{numina_math_datasets,
  author = {Jia Li and Edward Beeching and Lewis Tunstall and Ben Lipkin and Roman Soletskyi and Shengyi Costa Huang and Kashif Rasul and Longhui Yu and Albert Jiang and Ziju Shen and Zihan Qin and Bin Dong and Li Zhou and Yann Fleureau and Guillaume Lample and Stanislas Polu},
  title = {NuminaMath},
  year = {2024},
  publisher = {Numina},
  journal = {Hugging Face repository},
  howpublished = {\url{[https://huggingface.co/AI-MO/NuminaMath-CoT](https://github.com/project-numina/aimo-progress-prize/blob/main/report/numina_dataset.pdf)}}
}

@inproceedings{huang2025sharpening,
  title        = {Self-Improvement in Language Models: The Sharpening Mechanism},
  author       = {Audrey Huang and Adam Block and Dylan J. Foster and Dhruv Rohatgi and Cyril Zhang and Max Simchowitz and Jordan T. Ash and Akshay Krishnamurthy},
  booktitle    = {ICLR},
  year         = {2025},
  howpublished = {OpenReview preprint},
  note         = {arXiv:2412.01951},
  url          = {https://openreview.net/forum?id=WJaUkwci9o}
}

@misc{liu2025noverincentivetraininglanguage,
      title={NOVER: Incentive Training for Language Models via Verifier-Free Reinforcement Learning}, 
      author={Wei Liu and Siya Qi and Xinyu Wang and Chen Qian and Yali Du and Yulan He},
      year={2025},
      eprint={2505.16022},
      archivePrefix={arXiv},
      primaryClass={cs.CL},
      url={https://arxiv.org/abs/2505.16022}, 
}

@inproceedings{
gurung2025learning,
title={Learning to Reason for Long-Form Story Generation},
author={Alexander Gurung and Mirella Lapata},
booktitle={Second Conference on Language Modeling},
year={2025},
url={https://openreview.net/forum?id=dr3eg5ehR2}
}

@article{gao2025one,
  title={One-shot entropy minimization},
  author={Gao, Zitian and Chen, Lynx and Luo, Haoming and Zhou, Joey and Dai, Bryan},
  journal={arXiv preprint arXiv:2505.20282},
  year={2025}
}

@article{li2025darling,
  title        = {Jointly Reinforcing Diversity and Quality in Language Model Generations (DARLING)},
  author       = {Tianjian Li and Yiming Zhang and Ping Yu and Swarnadeep Saha and Daniel Khashabi and Jason Weston and Jack Lanchantin and Tianlu Wang},
  year         = {2025},
  archivePrefix= {arXiv},
  eprint       = {2509.02534},
  primaryClass = {cs.LG},
  url          = {https://arxiv.org/abs/2509.02534}
}

@inproceedings{huang2025bestofn,
  title     = {Is Best-of-N the Best of Them? Coverage, Scaling, and Optimality in Inference-Time Alignment},
  author    = {Huang, Audrey and Block, Adam and Liu, Qinghua and Jiang, Nan and Krishnamurthy, Akshay and Foster, Dylan J.},
  booktitle = {International Conference on Machine Learning (ICML)},
  year      = {2025},
  url       = {https://arxiv.org/abs/2503.21878},
  note      = {See also OpenReview: QnjfkhrbYK}
}

@article{song2025outcomebased,
  title        = {Outcome-based Exploration for LLM Reasoning},
  author       = {Yuda Song and Julia Kempe and R\'emi Munos},
  year         = {2025},
  eprint       = {2509.06941},
  archivePrefix= {arXiv},
  primaryClass = {cs.LG},
  url          = {https://arxiv.org/abs/2509.06941}
}

@article{kayal2025intrinsic,
  title        = {The Impact of Intrinsic Rewards on Exploration in Reinforcement Learning},
  author       = {Aya sur Kayal and Eduardo Pignatelli and Laura Toni},
  year         = {2025},
  eprint       = {2501.11533},
  archivePrefix= {arXiv},
  primaryClass = {cs.LG},
  url          = {https://arxiv.org/abs/2501.11533}
}

@misc{zhao2025learningreasonexternalrewards,
      title={Learning to Reason without External Rewards}, 
      author={Xuandong Zhao and Zhewei Kang and Aosong Feng and Sergey Levine and Dawn Song},
      year={2025},
      eprint={2505.19590},
      archivePrefix={arXiv},
      primaryClass={cs.LG},
      url={https://arxiv.org/abs/2505.19590}, 
}

@misc{prabhudesai2025maximizingconfidenceimprovesreasoning,
      title={Maximizing Confidence Alone Improves Reasoning}, 
      author={Mihir Prabhudesai and Lili Chen and Alex Ippoliti and Katerina Fragkiadaki and Hao Liu and Deepak Pathak},
      year={2025},
      eprint={2505.22660},
      archivePrefix={arXiv},
      primaryClass={cs.LG},
      url={https://arxiv.org/abs/2505.22660}, 
}

@misc{li2025confidenceneedfewshotrl,
      title={Confidence Is All You Need: Few-Shot RL Fine-Tuning of Language Models}, 
      author={Pengyi Li and Matvey Skripkin and Alexander Zubrey and Andrey Kuznetsov and Ivan Oseledets},
      year={2025},
      eprint={2506.06395},
      archivePrefix={arXiv},
      primaryClass={cs.CL},
      url={https://arxiv.org/abs/2506.06395}, 
}

@article{chen2021evaluating,
  title={Evaluating large language models trained on code},
  author={Chen, Mark and Tworek, Jerry and Jun, Heewoo and Yuan, Qiming and de Oliveira Pinto, Henrique Ponde and Kaplan, Jared and Edwards, Harri and Burda, Yuri and Joseph, Nicholas and Brockman, Greg and others},
  journal={arXiv preprint arXiv:2107.03374},
  year={2021}
}

@inproceedings{austin2021programs,
  title={Program synthesis with large language models},
  author={Austin, Jacob and Odena, Augustus and Nye, Maxwell and Bosma, Maarten and Michalewski, Henryk and Dohan, David and Jiang, Ellen and Cai, Carrie and Terry, Michael and Le, Quoc and others},
  booktitle={ICML},
  year={2021}
}

@inproceedings{
hendrycks2021measuring_apps,
title={Measuring Coding Challenge Competence With {APPS}},
author={Dan Hendrycks and Steven Basart and Saurav Kadavath and Mantas Mazeika and Akul Arora and Ethan Guo and Collin Burns and Samir Puranik and Horace He and Dawn Song and Jacob Steinhardt},
booktitle={Thirty-fifth Conference on Neural Information Processing Systems Datasets and Benchmarks Track (Round 2)},
year={2021},
url={https://openreview.net/forum?id=sD93GOzH3i5}
}

@misc{deepscaler2025,
  title={DeepScaleR: Surpassing O1-Preview with a 1.5B Model by Scaling RL},
  author={Michael Luo and Sijun Tan and Justin Wong and Xiaoxiang Shi and William Tang and Manan Roongta and Colin Cai and Jeffrey Luo and Tianjun Zhang and Erran Li and Raluca Ada Popa and Ion Stoica},
  year={2025},
  howpublished={\url{https://pretty-radio-b75.notion.site/DeepScaleR-Surpassing-O1-Preview-with-a-1-5B-Model-by-Scaling-RL-19681902c1468005bed8ca303013a4e2}},
  note={Notion Blog},
}

@misc{dong2025reinforcementpretraining,
      title={Reinforcement Pre-Training}, 
      author={Qingxiu Dong and Li Dong and Yao Tang and Tianzhu Ye and Yutao Sun and Zhifang Sui and Furu Wei},
      year={2025},
      eprint={2506.08007},
      archivePrefix={arXiv},
      primaryClass={cs.CL},
      url={https://arxiv.org/abs/2506.08007}, 
}

@misc{yu2025rlpr,
  title        = {RLPR: Extrapolating RLVR to General Domains without Verifiers},
  author       = {Tianyu Yu and Bo Ji and Shouli Wang and Shu Yao and Zefan Wang and Ganqu Cui and Lifan Yuan and Ning Ding and Yuan Yao and Zhiyuan Liu and Maosong Sun and Tat-Seng Chua},
  year         = {2025},
  eprint       = {2506.18254},
  archivePrefix= {arXiv},
  primaryClass = {cs.LG},
  url          = {https://arxiv.org/abs/2506.18254}
}

@misc{zhou2025verifree,
  title        = {Reinforcing General Reasoning without Verifiers},
  author       = {Xiangxin Zhou and Zichen Liu and Anya Sims and Haonan Wang and Tianyu Pang and Chongxuan Li and Liang Wang and Min Lin and Chao Du},
  year         = {2025},
  howpublished = {arXiv preprint arXiv:2505.21493},
  note         = {Available at \url{https://arxiv.org/abs/2505.21493}, submitted May 27, 2025},
}

@article{song2024mind,
  title={Mind the gap: Examining the self-improvement capabilities of large language models},
  author={Song, Yuda and Zhang, Hanlin and Eisenach, Carson and Kakade, Sham and Foster, Dean and Ghai, Udaya},
  journal={arXiv preprint arXiv:2412.02674},
  year={2024}
}

@article{guo2025deepseek,
  title={Deepseek-r1: Incentivizing reasoning capability in llms via reinforcement learning},
  author={Guo, Daya and Yang, Dejian and Zhang, Haowei and Song, Junxiao and Zhang, Ruoyu and Xu, Runxin and Zhu, Qihao and Ma, Shirong and Wang, Peiyi and Bi, Xiao and others},
  journal={arXiv preprint arXiv:2501.12948},
  year={2025}
}

@article{Yang2024Qwen25TR,
  title={Qwen2.5 Technical Report},
  author={Qwen An Yang and Baosong Yang and Beichen Zhang and Binyuan Hui and Bo Zheng and Bowen Yu and Chengyuan Li and Dayiheng Liu and Fei Huang and Guanting Dong and Haoran Wei and Huan Lin and Jian Yang and Jianhong Tu and Jianwei Zhang and Jianxin Yang and Jiaxin Yang and Jingren Zhou and Junyang Lin and Kai Dang and Keming Lu and Keqin Bao and Kexin Yang and Le Yu and Mei Li and Mingfeng Xue and Pei Zhang and Qin Zhu and Rui Men and Runji Lin and Tianhao Li and Tingyu Xia and Xingzhang Ren and Xuancheng Ren and Yang Fan and Yang Su and Yi-Chao Zhang and Yunyang Wan and Yuqi Liu and Zeyu Cui and Zhenru Zhang and Zihan Qiu and Shanghaoran Quan and Zekun Wang},
  journal={ArXiv},
  year={2024},
  volume={abs/2412.15115},
}

@article{dubey2024llama,
  title={The llama 3 herd of models},
  author={Dubey, Abhimanyu and Jauhri, Abhinav and Pandey, Abhinav and Kadian, Abhishek and Al-Dahle, Ahmad and Letman, Aiesha and Mathur, Akhil and Schelten, Alan and Yang, Amy and Fan, Angela and others},
  journal={arXiv e-prints},
  pages={arXiv--2407},
  year={2024}
}
